\documentclass{article} % For LaTeX2e
\usepackage{iclr2025_conference,times}

% Optional math commands from https://github.com/goodfeli/dlbook_notation.
%%%%% NEW MATH DEFINITIONS %%%%%

\usepackage{amsmath,amsfonts,bm}

% Mark sections of captions for referring to divisions of figures

% Highlight a newly defined term

% Figure reference, lower-case.

% Figure reference, capital. For start of sentence

% Section reference, lower-case.

% Section reference, capital.

% Reference to two sections.

% Reference to three sections.

% Reference to an equation, lower-case.
\def\eqref#1{equation~\ref{#1}}
% Reference to an equation, upper case

% A raw reference to an equation---avoid using if possible

% Reference to a chapter, lower-case.

% Reference to an equation, upper case.

% Reference to a range of chapters

% Reference to an algorithm, lower-case.

% Reference to an algorithm, upper case.

% Reference to a part, lower case

% Reference to a part, upper case

\def\1{\bm{1}}

% Random variables

% rm is already a command, just don't name any random variables m

% Random vectors

% Elements of random vectors

% Random matrices

% Elements of random matrices

% Vectors

% Elements of vectors

% Matrix

% Tensor
\DeclareMathAlphabet{\mathsfit}{\encodingdefault}{\sfdefault}{m}{sl}
\SetMathAlphabet{\mathsfit}{bold}{\encodingdefault}{\sfdefault}{bx}{n}

% Graph

% Sets

% Don't use a set called E, because this would be the same as our symbol
% for expectation.

% Entries of a matrix

% entries of a tensor
% Same font as tensor, without \bm wrapper

% The true underlying data generating distribution

% The empirical distribution defined by the training set

% The model distribution

% Stochastic autoencoder distributions

 % Laplace distribution

% Wolfram Mathworld says $L^2$ is for function spaces and $\ell^2$ is for vectors
% But then they seem to use $L^2$ for vectors throughout the site, and so does
% wikipedia.

 % See usage in notation.tex. Chosen to match Daphne's book.

\usepackage{hyperref}
\usepackage{url}

%OURS
\usepackage{booktabs}
\usepackage{amsthm}
\usepackage{amsmath}
\usepackage{amssymb}
\usepackage{xcolor}
\usepackage{bbm}
\usepackage{graphicx}
\usepackage{caption}
\usepackage{subcaption}
\usepackage{multirow}
\usepackage{enumitem}
\usepackage[most]{tcolorbox}
\usepackage[linesnumbered,ruled,vlined]{algorithm2e}
\SetKwInput{KwInput}{Input}
\SetKwInput{KwOutput}{Output}

\newtheorem{lemma}{Lemma}
\newtheorem{remark}{Remark}
\newtheorem{theorem}{Theorem}
\def\unif{\mathsf{Unif}}

\def\vocab{\mathcal{X}}
\def\expectation{\mathbb{E}}
\def\indicator{\mathbbm{1}}
\def\masktoken{\omega}
\def\lossfn{\mathcal{L}}

\newcommand{\bP}{\mathbb{P}}

\newcommand{\cB}{\mathcal{B}}
\newcommand{\cD}{\mathcal{D}}
\newcommand{\ourmodel}{\text{GGM}}

\newtcolorbox{outputbox}{
    colback=white, 
    colframe=gray!30,
    boxrule=0.5pt,
    arc=5pt,
    enhanced,
    breakable,
    pad at break=10pt,
    top=5pt,
    bottom=5pt,
    left=10pt,
    right=10pt,
    width=\linewidth,
    fontupper=\small\itshape
}
%OURS

\title{Glauber Generative Model: Discrete Diffusion Models via Binary Classification}

% Authors must not appear in the submitted version. They should be hidden
% as long as the \iclrfinalcopy macro remains commented out below.
% Non-anonymous submissions will be rejected without review.

\author{Harshit Varma \thanks{work done while employed at Google DeepMind.}\\
    Inception Labs\\
    \texttt{harshit@inceptionai.xyz}
    \And
    Dheeraj Nagaraj\\
    Google DeepMind \\
    \texttt{dheerajnagaraj@google.com}\\
    \AND
    Karthikeyan Shanmugam \\
    Google DeepMind \\
    \texttt{karthikeyanvs@google.com}
}

% The \author macro works with any number of authors. There are two commands
% used to separate the names and addresses of multiple authors: \And and \AND.
%
% Using \And between authors leaves it to \LaTeX{} to determine where to break
% the lines. Using \AND forces a linebreak at that point. So, if \LaTeX{}
% puts 3 of 4 authors names on the first line, and the last on the second
% line, try using \AND instead of \And before the third author name.

\iclrfinalcopy % Uncomment for camera-ready version, but NOT for submission.
\begin{document}

\maketitle

\begin{abstract}
We introduce the Glauber Generative Model (\ourmodel), a new class of discrete diffusion models, to obtain new samples from a distribution given samples from a discrete space. \ourmodel~deploys a discrete Markov chain called the heat bath dynamics (or the Glauber dynamics) to denoise a sequence of noisy tokens to a sample from a joint distribution of discrete tokens. Our novel conceptual framework provides an exact reduction of the task of learning the denoising Markov chain to solving a class of binary classification tasks. More specifically, the model learns to classify a given token in a noisy sequence as signal or noise. In contrast, prior works on discrete diffusion models either solve regression problems to learn importance ratios, or minimize loss functions given by variational approximations.  We apply \ourmodel~to language modeling and image generation, where images are discretized using image tokenizers like VQGANs.
We show that it outperforms existing discrete diffusion models in language generation, and demonstrates strong performance for image generation without using dataset-specific image tokenizers. We also show that our model is capable of performing well in zero-shot control settings like text and image infilling.
\end{abstract}

\section{Introduction}
\label{sec:introduction}

 Diffusion Models \citep{sohl2015deep,ho2020denoising,song2020score} use continuous time, continuous space diffusion processes to sample from a target distribution. These models start with pure noise and learn to `denoise' until a meaningful sample is produced and are very successful in generative modeling of images \citep{Rombach2021HighResolutionIS,patil2022stable,saharia2022photorealistic,ramesh2021zero}.  These models are trained by solving a class of regression tasks called score matching \citep{hyvarinen2005estimation}. Another popular choice is the class of
 auto-regressive models based on the transformer architecture, which are state-of-the-art in generative modeling of languages \citep{vaswani2017attention,kenton2019bert,brown2020language,team2023gemini,Esser2020TamingTF,yu2022scaling}. These models work with a discrete set (the set of `tokens'), where sequences of tokens make up meaningful text. They learn to generate tokens autoregressively by solving a multi-class classification problem. 

Our work considers discrete diffusion models, where the model learns to denoise a sequence of tokens to produce a sample from a given distribution over token sequences. In language generation, this can mean starting from gibberish and denoising it to obtain a meaningful sentence. Such models have been studied in the literature to try and combine the successes of Diffusion models and Autoregressive models. We propose a new class of discrete diffusion models called the Glauber Generative Model (\ourmodel) where the the denoiser is a discrete time Markov chain over token sequences, called the Glauber dynamics (also known as heat bath dynamics and Gibbs sampling). This is a central object of study in Statistical Physics of spin systems, Probability Theory and Bayesian inference \citep{martinelli1999lectures,martinelli1994approach,levin2017markov,gelfand1990sampling,geman1984stochastic,he2016scan} and is a discrete analogue of Langevin Dynamics. \ourmodel~is trained by solving a well defined class of binary-classification problems -- one of the simplest machine learning tasks -- to classify if a specific token in a sequence of noisy tokens is `signal' or `noise'. In the context of language modeling, this roughly means figuring out whether a word is out-of-place in a sentence. We show an exact reduction from such a classifier to a denoising Glauber dynamics based sampler in Section~\ref{sec:ggm}.

\subsection{Related Work}

Recent works have explored diffusion models for discrete data. Approaches for diffusion on discrete data can be broadly classified into two kinds: (i) discrete diffusion in the token space where a discrete time Markov chain is the denoiser, (ii) continuous diffusion in an embedding space of the discrete data. We briefly discuss these approaches and their applications in this section.

\textbf{Discrete diffusion via Discrete Markov Chains:}
Diffusion models over discrete spaces, analogous to continuous space diffusion, were introduced in \citep{sohl2015deep} for generative modeling. Argmax flows \citep{hoogeboom2021argmax} and D3PM \citep{Austin2021StructuredDD} (and refinements such as \citep{Zheng2023ARD})
solidified these ideas -- by considering the noising process to be a discrete time, discrete space Markov chain. They propose a variational loss function to teach a neural network to reverse this Markov chain. More specifically, D3PM considers character-level language modeling, token-level language modeling on short sequences ($128$) with relatively small vocabularies ($8192$), and low-resolution (e.g., $32 \times 32$) image generation tasks 
with small models (e.g., $36$M)). 
On language modeling tasks these models still perform worse than an autoregressive transformer-based model with comparable parameters. Scaling these models to larger vocabularies is an active area of research.
MDLM~\citep{Sahoo2024SimpleAE} simplifies and improves upon D3PM by making different engineering choices and considering a simplified training objective for a specific noising process.
Using image tokenizers like VQGANs~\citep{Esser2020TamingTF}, VQ-DDM~\citep{Hu2021GlobalCW} and VQ-Diffusion~\citep{Gu2021VectorQD} are able to apply models similar to Argmax flows and D3PM to generate images with larger resolutions. 
Within this framework, DiffusER~\citep{Reid2022DiffusERDD} considers the noising Markov Chain to be Levenshtein edit operations over text for language modeling and evaluates its performance for downstream tasks such as translation and textual style transfer. 

While continuous diffusion models are learnt by solving the regression task of score matching, another line of work \citep{Lou2023DiscreteDL,meng2022concrete} considers its discrete analogue called ratio matching \citep{hyvarinen2007some,sun2022score}. These works that argue that Argmax flow and D3PM in fact learn these importance ratios of probability distributions indirectly, and propose to learn these directly by solving a class of regression tasks. While \citep{meng2022concrete} considers least squares regression in this setting, \citep{Lou2023DiscreteDL} uses a tailored loss function. Directly learning these ratios improves the learning performance as observed in SEDD \citep{Lou2023DiscreteDL}, achieving competitive performance to autoregressive models for language modeling via discrete diffusion. 

\textbf{Discrete Diffusion via Continuous Diffusion over Embeddings:}
This line of work, introduced by \citep{Li2022DiffusionLMIC} and further explored in  \citep{Gulrajani2023LikelihoodBasedDL,Strudel2022SelfconditionedED,Ye2023DINOISERDC,He2022DiffusionBERTIG,Yuan2022SeqDiffuSeqTD} considers embedding the discrete space in a continuous space and applies continuous diffusion models. Most notably, Plaid~\citep{Gulrajani2023LikelihoodBasedDL} ($1.3$B parameters) outperforms a $124$M GPT2 model in likelihood on several language modeling benchmarks. However, GPT2-medium with $345$M parameters ($3.8\times$ less parameters than Plaid) still outperforms Plaid on all of the benchmarks used in the evaluation -- indicating a need for further improvement.
Approaches like SSD-LM \citep{Han2022SSDLMSS} and TESS \citep{Mahabadi2023TESSTS} apply continuous diffusion in a logit space constructed over the vocabulary and evaluate on downstream tasks such as question generation and summarization. 

\textbf{Applications:} Diffusion-based approaches for modeling discrete data have also been utilized in several recent applications.
DiMA \citep{Meshchaninov2024DiffusionOL} uses continuous diffusion in the embedding space of a protein language model and outperforms autoregressive language models at unconditional protein sequence generation.
DFMs \citep{Campbell2024GenerativeFO} propose a class of flow-based models designed for discrete data that achieve state-of-the-art results for protein co-design.

\textbf{Glauber Dynamics:} (also known as Gibbs Sampling) has been studied extensively in the statistical physics literature to understand spin systems \citep{martinelli1994approach,martinelli1999lectures}. The study of its mixing properties for various spin systems is an important sub-field of probability theory \citep{levin2017markov,benjamini2005mixing,diaconis2000analysis,mossel2010gibbs,gheissari2022low,guo2017random}.  Gibbs sampling has found numerous applications in Statistics and Bayesian Machine Learning for sampling from discrete posterior distributions \citep{gelfand1990sampling,geman1984stochastic,zhang2001segmentation,zhu1998filters}.

\subsection{Our Contributions}

We introduce a novel theoretical framework for training discrete diffusion models, called \ourmodel, based on an exact reduction to solving a class of $O(T|\vocab|)$ binary classification problems where $\vocab$ is the set of tokens and $T$ is the number denoising steps. Prior works based on D3PM incur a complexity of $O(|\vocab|^2T)$ in general to learn the denoising Markov chain directly (see Remark~\ref{rem:compare}). Moreover, the denoising process of \ourmodel~is a time dependent Markov chain which flips one token at a time, in contrast with prior works which flip multiple tokens simultaneously.
Our empirical evaluation shows that \ourmodel~obtains a strong performance in the case of language generation, where it outperforms existing discrete diffusion models (Table~\ref{tab:genppl}). We show that time-independent Glauber dynamics implemented with masked language models like BERT cannot attain such a performance even with $32\times$ more steps (Figure~\ref{subfig:genppl_bert_vs_ggm}). 
We also demonstrate \ourmodel's ability to generate high quality images on $256\times 256$ CelebA-HQ and FFHQ datasets, where its performance rivals several popular diffusion models and GANs, without using a dataset-specific image tokenizer. We then demonstrate that our model can perform zero-shot image and text infilling with a variety of different masks. For the task of image generation, all the state-of-the-art methods use dataset-specific tokenizers and additional optimizations. We believe such techniques can further boost our model's performance and bridge the gap between our models and the state-of-the-art. Thus, we believe our framework is conceptually elegant, scalable, empirically competitive, and has the potential to be widely used for generative modeling. 
\section{Problem Setup}

\subsection{Notation}
By $\vocab$ we denote a discrete set and call its elements as `tokens'. Given $L\in \mathbb{N}$, we let $\vocab^{L}$ denote $\vocab$ valued sequences of length $L$. Given any $x \in \vocab^L$ and $i \in \{0,\dots,L-1\}$, $x_{-i} \in \vocab^{L-1}$ denotes the sequence of length $L-1$ obtained form $x$ when the $i$-th position is removed. $x_i \in \vocab$ denotes the $i$-th position of $x$. Given a finite set $A$, we let $\unif(A)$ denote the uniform probability distribution over the elements of $A$. For a probability distribution $Q$ over a finite space $\mathcal{Y}$, with $A \subseteq \mathcal{Y}$, $Q(|A)$ denotes its conditional distribution over $A$. 
\subsection{Glauber Dynamics}\label{sec:glauber}
 Let $\vocab$ be a finite set (such as all possible tokens in a language model or in a VQGAN). Glauber dynamics is a Markov chain to sample a joint distribution over the space $\vocab^L$ by flipping one token at a time. Given a probability distribution $P$ over $\vocab^L$, and $X_0 \in \vocab^L$, Glauber dynamics obtains the trajectory $X_0,\dots,X_T$ by sampling $X_{t+1}$ given $X_t$ as follows:
\begin{enumerate}[noitemsep,topsep=0pt]
    \item Sample $I_t \sim \unif(\{0,1,\dots,n-1\})$ independent of $X_t$.
    \item $X_{t+1,i} = X_{t,i}$ if $i \neq I_t$.
    \item $X_{t+1,i} \sim P(X_i| X_{-i} = X_{t,-i})$ if $i = I_t$
\end{enumerate}
Glauber dynamics has $P$ as its unique stationary distribution under mild conditions over $P$. Even with exact knowledge of $P(X_i| X_{-i} = X_{t,-i})$, the iterates of Glauber dynamics can be very slow to converge to the distribution $P$, often requiring time exponential in $L$. In this work we consider Glauber dynamics with two differences:
\begin{enumerate}[noitemsep,topsep=0pt]
    \item $I_t$ is chosen from a fixed permutation in a round robin fashion (as in \citep{he2016scan}).
    \item The dynamics is time dependent. That is, $P(X_{i}|X_{-i} = X_{t,-i})$ is replaced by $P_t(X_{i}|X_{-i} = X_{t,-i})$ for $t = 0,1,\dots,T-1$ for some carefully designed $P_t$.
\end{enumerate}
In this framework, we obtain an exact sample from $P$ at time $T = \tilde{O}(L)$\footnote{In the rest of the paper, our Glauber dynamics goes reverse in time: $X_{t-1,i}\sim P_t(X_t|X_{t,-i})$ as is the convention in the diffusion model literature.}. We show case this difference by implementing time-independent Glauber dynamics via a bi-directional transformer based masked language models. We describe this procedure based on \citep{Wang2019BERTHA} in Appendix~\ref{app:bertgibbs}. Our experiments (see Table~\ref{tab:genppl}, BERT-large w/ Gibbs sampling) show that even with a large number of steps, the quality of generation with this method is much worse than \ourmodel{} (see Figure~\ref{subfig:genppl_bert_vs_ggm}).

\begin{figure}[t]
    \centering
    \includegraphics[width=1\textwidth]{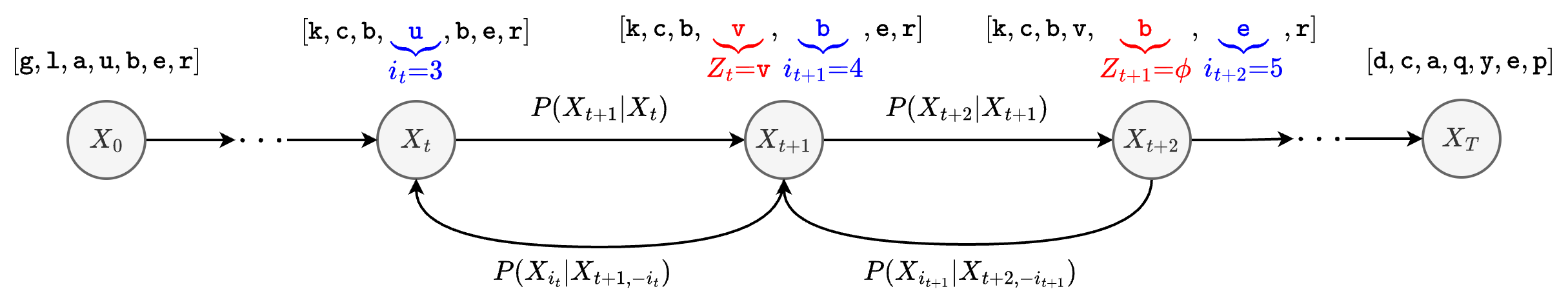}
    \caption{Example of Glauber dynamics in a discrete token space, where the tokens are characters.}
    \label{fig:ggm_example}
\end{figure}

\section{Glauber Generative Model (\ourmodel)}
\label{sec:ggm}

Diffusion models in the continuous domain contain 3 crucial steps: (i) forward process: where a sample from the target distribution is gradually noised to obtain a sample from the standard Gaussian distribution over many iterations, (ii) reverse process: which time reverses the forward process -- i.e., it denoises a Gaussian random vector into a sample from the target distribution, and (iii) model training: where the reverse process is learned with data from the forward process. 
We follow the same recipe to describe \ourmodel{} below. Consider a finite set $\vocab$ and $L\in \mathbb{N}$. Suppose $P^{*}$ is the target over $\vocab^{L}$ (i.e., sequences of length $L$ from $\vocab$). 

\subsection{Forward Process}\label{subsec:fwd}
Fix an element $\phi \not\in \vocab$. At time $t \in \{0,\dots,T-1\}$, let $\Pi_{t}$ denote a distribution over $\vocab \cup \{\phi\}$. We think of $\Pi_t$ as the noise distribution. Let $i_0,\dots,i_{T-1} \in \{0,\ldots,L-1\}$ be fixed. We pick $i_0,\dots,i_{T-1}$ in a round robin way with respect to some permutation in order to ensure that each coordinate appears in $i_0,\dots,i_{T-1}$ at-least $T/L$ times. Suppose $X_0$ is a sample from $P^{*}$. At time $t$, the coordinate $i_t$ of $X_t$ is noised using $\Pi_t$ in order to obtain $X_{t+1}$:
Draw a token $Z_t \sim \Pi_{t}$ independent of $X_t$. Set:
\begin{equation}
   X_{t+1,j} = 
    \begin{cases} X_{t,j} \text{ if } j \neq i_t \\
    X_{t,j} \text{ if } j = i_t \text{ and } Z_t = \phi \\
    Z_t \text{ if } j = i_t \text{ and } Z_t \in \vocab
    \end{cases}
\end{equation}
That is, $X_{t+1} = X_t$ with probability $\Pi_t(\phi)$ and $X_{t+1,i_t}$ is sampled from $\Pi_t(\cdot|\vocab)$ with probability $1-\Pi_t(\phi)$. 
 It is easy to obtain $X_t,X_{t+1}$ directly from $X_0$ as described in Appendix~\ref{app:direct_gen}. We denote the distribution of $X_{t}$ by $P_t$. 
 
% \begin{lemma}\label{lem:fwd_converge}
% Suppose that $\Pi_t(\cdot|\vocab) = \Pi(\cdot|\vocab)$ is the same for every $t$. Suppose $\Pi_t(\phi) \leq 1-\epsilon$ for some $\epsilon > 0$. As $T \to \infty$, the distribution of $X_T$ converges to $\Pi(\cdot|\vocab)^{\otimes L}$ in total variation distance. 
% \end{lemma}

\begin{lemma}\label{lem:fwd_converge}
Suppose that $\Pi_t(\cdot|\vocab) = \Pi(\cdot|\vocab)$ is the same for every $t$. Suppose $\Pi_t(\phi) \leq 1-\epsilon$ for some $\epsilon > 0$. As $T \to \infty$, the distribution of $X_T$ converges to $\Pi(\cdot|\vocab)^{\otimes L}$ in total variation distance. Specifically, we have:
$$\mathsf{TV}(\mathsf{Law}(X_T),\Pi(\cdot|\vocab)^{\otimes L}) \leq L(1-\epsilon)^{\lfloor T/L\rfloor}$$
\end{lemma}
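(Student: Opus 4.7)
The plan is to reduce the convergence claim to a simple concentration fact: once a coordinate has been refreshed at least once by a token drawn from $\vocab$, its value is an independent draw from $\Pi(\cdot|\vocab)$, regardless of how the process evolved before or after that refresh. So the whole argument boils down to showing every coordinate is refreshed at least once with high probability as $T \to \infty$.

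More concretely, I would introduce, for each coordinate $j \in \{0,\dots,L-1\}$, the event
\[
R_j^{(T)} = \bigl\{\text{there exists } t < T \text{ with } i_t = j \text{ and } Z_t \in \vocab\bigr\},
\]
and let $E_T = \bigcap_{j=0}^{L-1} R_j^{(T)}$. Since the $i_t$'s are chosen round robin, each coordinate is visited at least $\lfloor T/L \rfloor$ times, and at each visit an independent noise token $Z_t$ lies in $\vocab$ with probability $1 - \Pi_t(\phi) \geq \epsilon$. Independence of the $Z_t$'s across $t$ gives $\bP(R_j^{(T)\,c}) \leq (1-\epsilon)^{\lfloor T/L\rfloor}$, and a union bound yields $\bP(E_T^c) \leq L(1-\epsilon)^{\lfloor T/L\rfloor} \to 0$.

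Next I would argue that, conditional on $E_T$, the law of $X_T$ is exactly $\Pi(\cdot|\vocab)^{\otimes L}$. For this, define $t_j^\star = \max\{t < T : i_t = j, Z_t \in \vocab\}$, which is well-defined on $E_T$; then $X_{T,j} = Z_{t_j^\star}$. Conditioning on the tuple $(t_0^\star,\dots,t_{L-1}^\star)$ (a deterministic function of the $Z_t$'s), these $L$ times are distinct, and by independence of the $Z_t$'s across $t$ together with the assumption $\Pi_t(\cdot|\vocab) = \Pi(\cdot|\vocab)$, the random variables $\{Z_{t_j^\star}\}_{j}$ are independent with common law $\Pi(\cdot|\vocab)$. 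Averaging over the refresh times preserves this product law on $E_T$.

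The conclusion is then the standard coupling bound: for any $A \subseteq \vocab^L$,
\[
\bigl|\bP(X_T \in A) - \Pi(\cdot|\vocab)^{\otimes L}(A)\bigr| \leq \bP(E_T^c) \leq L(1-\epsilon)^{\lfloor T/L\rfloor},
\]
which tends to $0$. The only mildly delicate step is the conditional-distribution claim, since $t_j^\star$ is random and coupled across coordinates through the realization of the $Z_t$'s; the clean fix is to condition on the tuple of last-refresh times and use pairwise disjointness of the index sets together with independence of $(Z_t)_{t<T}$. Everything else (round robin, geometric tail of the Bernoulli, union bound) is routine.
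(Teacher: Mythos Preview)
Your proof is correct and follows essentially the same route as the paper: both identify the last successful refresh time for each coordinate, observe that $X_{T,j}=Z_{t_j^\star}$ on the event $E_T$ that every coordinate has been refreshed, and bound the total variation distance by $\bP(E_T^c)$ via a union bound over coordinates and a geometric tail on the Bernoulli refreshes. The paper packages this as an explicit coupling (defining $Y_j=Z_{\tau(j)}$ on $E_{T,j}$ and invoking the coupling characterization of $\mathsf{TV}$), whereas you compute the conditional law of $X_T$ given $E_T$ directly; these are equivalent, and your treatment of the ``delicate step'' is in fact more careful than the paper's one-line ``it is easy to show.''
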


The proof of this lemma is given in Appendix~\ref{app:fwd_converge}. Under the choice $\Pi_t(\cdot|\vocab) = \unif(\vocab)$ and $\Pi_t(\phi) = \frac{1}{2}$. In this case, Lemma~\ref{lem:fwd_converge} shows that the forward process converges to $\unif(\vocab)^{L}$ -- i.e., each token is chosen i.i.d uniformly at random from $\vocab$.

\subsection{Reverse Process via Glauber Dynamics}\label{subsec:rev}
Consider $P_{t}$ as given in the forward process. Suppose we have $\hat{X}_{t+1} \sim P_{t+1}$. Then, the reverse process seeks to obtain a sample $\hat{X}_t \sim P_{t}$. We call the following time dependent Glauber dynamics update to sample $X_t$ as the reverse Glauber dynamics:

\begin{enumerate}[noitemsep,topsep=0pt]
    \item $\hat{X}_{t,j} = \hat{X}_{t+1, j}$ for all $j \neq i_t$ 
    \item Sample $\hat{X}_{t,i_t}$ from the probability distribution $\bP(X_{t,i_t} = \cdot | X_{t+1, -i_t} = \hat{X}_{t,-i_t})$ 
\end{enumerate}

\begin{lemma}
\label{lem:perfect_reversal}
Suppose $\hat{X}_T \sim P_T$. Suppose $\hat{X}_0$ is obtained by applying the reverse Glauber Dynamics $T$ times as defined above. Then, $\hat{X}_0 \sim P^{*}$.
\end{lemma}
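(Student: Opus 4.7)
I would prove this by reverse induction on $t$, from $t = T$ down to $t = 0$, with the inductive claim that $\hat{X}_t \sim P_t$. The base case is immediate from the hypothesis $\hat{X}_T \sim P_T$. For the inductive step, assume $\hat{X}_{t+1} \sim P_{t+1}$; the goal is to compute the distribution of $\hat{X}_t$ obtained from the reverse update and show it equals $P_t$.

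\textbf{Key observation.} The crucial fact about the forward process (Section~\ref{subsec:fwd}) is that at step $t$ only coordinate $i_t$ is (potentially) modified, so $X_{t+1,-i_t} = X_{t,-i_t}$ almost surely. This has two consequences I would use repeatedly: first, the marginal distribution of $X_{t+1}$ on the coordinates $-i_t$ equals the marginal of $X_t$ on those coordinates, i.e.\ the $(-i_t)$-marginal of $P_{t+1}$ equals the $(-i_t)$-marginal of $P_t$; second, the conditional law used by the reverse chain,
\[
\bP(X_{t,i_t} = \cdot \mid X_{t+1,-i_t} = y),
\]
coincides with $\bP(X_{t,i_t} = \cdot \mid X_{t,-i_t} = y)$, again because the two conditioning events are a.s.\ identical under the forward law.

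\textbf{Inductive step.} Write an arbitrary $x \in \vocab^L$ as $(x_{i_t}, x_{-i_t})$. By construction of the reverse update, $\hat{X}_{t,-i_t} = \hat{X}_{t+1,-i_t}$ and $\hat{X}_{t,i_t}$ is drawn from $\bP(X_{t,i_t} = \cdot \mid X_{t+1,-i_t} = \hat{X}_{t+1,-i_t})$. Hence
\[
\bP(\hat{X}_t = x) = \bP(\hat{X}_{t+1,-i_t} = x_{-i_t}) \cdot \bP(X_{t,i_t} = x_{i_t} \mid X_{t+1,-i_t} = x_{-i_t}).
\]
Applying the inductive hypothesis to the first factor and the key observation to both factors, this becomes
\[
P_{t,-i_t}(x_{-i_t}) \cdot \bP(X_{t,i_t} = x_{i_t} \mid X_{t,-i_t} = x_{-i_t}),
\]
which is just the chain-rule decomposition of $P_t(x)$. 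Thus $\hat{X}_t \sim P_t$, completing the induction; taking $t = 0$ yields $\hat{X}_0 \sim P_0 = P^{*}$.

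\textbf{Main obstacle.} There is no substantive obstacle: the argument is essentially a bookkeeping exercise in Bayes' rule. The only point that needs care — and the reason the lemma is clean — is recognizing that the conditioning on $X_{t+1,-i_t}$ in the reverse kernel is equivalent to conditioning on $X_{t,-i_t}$, which is precisely what makes a Glauber-type single-site resample reproduce the joint law from its $(-i_t)$-marginal. If $\Pi_t(\phi) > 0$ (so $X_{t+1} = X_t$ with positive probability) one might worry about a degenerate case, but since the reverse kernel is defined by the joint law of $(X_t, X_{t+1})$, such ``no-op'' moves are automatically handled by the conditional probability and require no separate treatment.
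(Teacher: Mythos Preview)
Your proof is correct and follows essentially the same reverse-induction argument as the paper, computing $\bP(\hat{X}_t = x)$ via the definition of the reverse update, applying the inductive hypothesis to identify the $(-i_t)$-marginal, and then using the chain rule to recover $P_t(x)$. If anything, you are more explicit than the paper in spelling out why $X_{t+1,-i_t} = X_{t,-i_t}$ a.s.\ is the reason the final equality holds, which the paper leaves implicit.
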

\begin{proof}
Let $X_0,\dots, X_T$ denote the forward process described in Section~\ref{subsec:fwd}. We prove via induction that for any $t \in \{0,\dots,T-1\}$, $\hat{X}_{t+1} \sim P_{t+1}$ implies $\hat{X}_{t} \sim P_{t}$. Indeed, for any $x \in \vocab^L$, consider: 
\begin{align}
&\bP(\hat{X}_t = x) = \bP(\hat{X}_{t+1,-i_t} = x_{-i_t})\bP(X_{t,i_t}=x_{i_t}|X_{t+1,-i_t} = x_{-i_t}) \nonumber \\
&= \bP(X_{t+1,-i_t} = x_{-i_t})\bP(X_{t,i_t}=x_{i_t}|X_{t+1,-i_t} = x_{-i_t}) = \bP(X_t = x)= P_t(x)
\end{align}
The first step follows from the definition of the reverse Glauber Dynamics. The second step follows from the induction hypothesis. Thus we conclude the result. 
\end{proof}

\begin{remark}\label{rem:compare}
The forward process introduced in D3PM \citep{Austin2021StructuredDD} noises all tokens at once and considers a general class of noising processes. We consider a specific class of noising processes, and apply it to one token at a time. As we show below, this allows us to implement the reverse process by predicting single tokens instead of predicting for all tokens like in \citep{Austin2021StructuredDD,Gu2021VectorQD}. Moreover, this approach results in an exact reduction to binary classification. Prior approaches to Discrete Diffusion Models had to learn the transition probability of token value $a \in \vocab$ to token value $b \in \vocab$ at position $i$, scaling quadratically with $|\vocab|$ sized transition matrix resulting in suboptimal performance \citep{Hu2021GlobalCW}. Our training approach described below learns the probability that token $a \in \vocab$ appearing at position $i$ in $X_{t+1}$ was present in position $i$ of $X_t$ or not, which scales linearly in $|\vocab|$.
\end{remark}  %

\subsection{Model Training}\label{subsec:training}
We now describe how to train a neural network with data from the forward process (Section~\ref{subsec:fwd}) in order to implement the reverse Glauber dynamics (Section~\ref{subsec:rev}). The following lemma is the key to our learning algorithm. We refer to Appendix~\ref{sec:main_lemma_proof} for its proof.
\begin{lemma}\label{lem:main_lemma}
Suppose $x \in \vocab^n$. Suppose that $(X_t)_{t = 0,..,T-1}$ denotes the forward process. Then, for any $a \in \vocab$:
\begin{equation}
\label{eq:rev_infer}
   \bP(X_{t, i_t} = a | X_{t+1, -i_t}=x_{-i_t}) = \tfrac{\bP(Z_t=a)}{\bP(Z_t = \phi)}\left(\tfrac{1}{\bP(Z_t=a|X_{t+1,-i_t}=x_{-i_t},X_{t+1,i_t} = a)} - 1\right)
\end{equation}
\end{lemma}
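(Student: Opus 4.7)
The plan is to reduce the identity to a single application of Bayes' rule. The right-hand side features $\bP(Z_t = a \mid X_{t+1,-i} = x_{-i}, X_{t+1,i} = a)$, which is the posterior probability that the noise variable $Z_t$ actually took value $a$, given that position $i_t$ ended up as $a$. The left-hand side is a different posterior, namely that position $i_t$ was already $a$ before noising. So my approach is to compute each of these posteriors in closed form in terms of the same unknown $p := \bP(X_{t,i_t} = a \mid X_{t,-i_t} = x_{-i_t})$, and then eliminate $p$.

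First, I would record the key structural fact implied by the forward process in Section~\ref{subsec:fwd}: positions $j \neq i_t$ are untouched at step $t$, so $X_{t+1,-i_t} = X_{t,-i_t}$ as random variables. In particular, conditioning on $X_{t+1,-i_t} = x_{-i_t}$ is the same as conditioning on $X_{t,-i_t} = x_{-i_t}$, and since $Z_t$ is independent of $X_t$, it is independent of this event. Thus the left-hand side equals $p$, and $\bP(Z_t = a \mid X_{t+1,-i_t} = x_{-i_t}) = \bP(Z_t = a)$.

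Next I would compute $\bP(X_{t+1,i_t} = a \mid X_{t+1,-i_t} = x_{-i_t})$ by splitting on whether the noise symbol $Z_t$ equals $\phi$ or $a$: the event $\{X_{t+1,i_t} = a\}$ happens precisely when either $Z_t = a$, or $Z_t = \phi$ and $X_{t,i_t} = a$. Using the independence noted above, this evaluates to $\bP(Z_t = a) + \bP(Z_t = \phi)\, p$. Bayes' rule then gives
\begin{equation*}
\bP(Z_t = a \mid X_{t+1,-i_t} = x_{-i_t},\, X_{t+1,i_t} = a) \;=\; \frac{\bP(Z_t = a)}{\bP(Z_t = a) + \bP(Z_t = \phi)\, p},
\end{equation*}
because conditional on $\{Z_t = a\}$ the event $\{X_{t+1,i_t} = a\}$ occurs with probability $1$.

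Finally, I would invert this relation: taking reciprocals yields $1/q = 1 + \bP(Z_t = \phi)\, p / \bP(Z_t = a)$, where $q$ denotes the above posterior, and solving for $p$ gives exactly the claimed identity. I do not expect a serious obstacle here; the only point that needs care is justifying that $X_{t+1,-i_t} = X_{t,-i_t}$ almost surely so that the two conditioning events coincide, and that $Z_t$ is independent of $X_t$ jointly (both of which are built into the forward process definition).
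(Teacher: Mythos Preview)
Your proposal is correct and follows essentially the same approach as the paper: both arguments decompose the event $\{X_{t+1,i_t}=a\}$ into the disjoint cases $\{Z_t=a\}$ and $\{Z_t=\phi,\,X_{t,i_t}=a\}$, use the independence of $Z_t$ from $X_t$ together with $X_{t+1,-i_t}=X_{t,-i_t}$, express the posterior $\bP(Z_t=a\mid X_{t+1}=x)$ as $\bP(Z_t=a)/(\bP(Z_t=a)+\bP(Z_t=\phi)p)$, and then invert. The only cosmetic difference is that the paper computes joint probabilities of $\{X_{t+1}=x\}$ before dividing, whereas you work conditionally on $X_{t,-i_t}=x_{-i_t}$ from the outset.
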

Therefore, we can estimate $\bP(X_{t, i_t} = a | X_{t+1, -i_t}=x_{-i_t})$ using an estimate for $\bP(Z_t=x_{i_t}|X_{t+1}=x)$. The latter is equivalent to learning $\bP(Z_t = a|X_{t+1,-i_t}= x_{-i_t},X_{t+1,i_t} = a)$ for every $a \in \vocab$. 

\textbf{Reduction to Binary Classification:}
Given $a \in \vocab$ and time $t$, we consider the distribution $\cD_{t,a} = \mathsf{Law}((X_{t+1,-i_t},\mathbbm{1}_{Z_t = a}|X_{t+1, i_t} = a)$.  Given a sample $X_0 \sim P^{*}$, we can obtain $X_0,X_t,Z_t,X_{t+1}$ according to the forward process, allowing us to sample from $\cD_{t,a}$. Consider the following binary classification problem  (denoted by $\cB_{t}(P^{*},\Pi,a)$) corresponding to the data distribution $\cD_{t,a}$:
\[\text{Predict } \mathbbm{1}_{Z_t = a} \text{ given } X_{t+1,-i_t}\]
That is, we try to predict if $X_{t+1,i} = a$  because of pure noise (i.e., $Z_t = a$) or if it is because $X_{t,i} = a$ (signal). This can be solved by minimizing the cross entropy loss over a model class with input $X_{t+1,-i_t}$.
Such a model learns to predict $\bP(Z_t = a|X_{t+1,-i_t}= x_{-i_t},X_{t+1,i_t} = a)$.

\textbf{Training Neural Network:} Consider a mask token $\masktoken \not\in \vocab$, and parameterize a neural network $f_\theta: (\vocab \cup \{\masktoken\}) ^L\times\{0,\dots,T-1\} \to [0, 1]^{\vocab}$ to solve the sequence of binary classification problems $\cB_t(P^{*},\Pi,a)$ for $t\in \{0,\dots,T-1\},a \in \vocab$. The neural network input is $(x',t)$ where $x' \in (\vocab \cup \{\masktoken\})^L$, $x'_{t+1, j} \in \vocab$ for all $j \neq i_t$ and $x'_{i_t} = \masktoken$. The output logit corresponding to $a \in \vocab$ solves $\cB_t(P^{*},\Pi,a)$.  That is, $f_\theta$ outputs $\hat{y} = f_\theta(x',t) \in [0, 1]^{\vocab}$ where $\hat{y}_a$ models $\bP(Z_t=a|X_{t+1,-i_t}=x'_{-i_t}, X_{t+1, i_t}=a)$.
Note that using a special token like $\omega$ allows us to compute $\hat{y}_a$ for all $a \in \vocab$ in parallel in a single forward pass using the standard transformer architecture.
The loss function and the training algorithm is given in Algorithm~\ref{alg:training_ggm}.

\begin{algorithm}[ht]
\SetAlgoLined
\KwIn{Dataset $\mathcal{D}$, timesteps $T$, optimizer $\texttt{opt}$, model $f$, positions $\{i_t\}_{t=1}^T$, $\{\Pi_t\}_{t=1}^T$}
\KwResult{Trained parameters $\theta$}
Initialize $\theta$, initialize the optimizer state $\texttt{opt.initialize}(\theta)$\;
\For{each iteration}{
    Sample $X_0 \sim \mathcal{D}$, $t \sim \unif(\{0, \dots, T-2\})$\;
    Get $X_t, Z_t, X_{t+1} \gets \texttt{forward\_process}\left(X_0, t, \{i_s\}_{s=1}^t, \{\Pi_s\}_{s=1}^t\right)$
    \tcp{more details about the forward process are given in Appendix~\ref{app:direct_gen}}
    Set $\hat{X}'_{t+1,j} \gets X_{t+1,j}$ for all $j \neq i_t$, $\hat{X}'_{t+1,i_t} \gets \masktoken$ \tcp*{$\masktoken$ is the mask token}
    Compute loss $\lossfn(\theta;Z_t,X'_{t+1}) = -\indicator_{Z_t \neq \phi} \log{\left(f_\theta(X'_{t+1}, t)_{x_{t+1, i_t}}\right)} - \indicator_{Z_t  = \phi} \log{\left(1 - f_\theta(X'_{t+1}, t)_{x_{t+1, i_t}}\right)}$\;
    $\theta \gets \texttt{opt.update}(\theta, \nabla_\theta \mathcal{L}(\theta;Z_t,X_{t+1}))$ \tcp*{any optimizer like AdamW, etc.} %
}
\Return $\theta$\;
\caption{Training a Glauber Generative Model (\ourmodel)}
\label{alg:training_ggm}
\end{algorithm}

\sloppy
\textbf{Implementing the Reverse Process:}
We now use the trained neural network to approximate the reverse Glauber dynamics. By Lemma~\ref{lem:fwd_converge}, whenever $T$ is large $P_T \approx \Pi(\cdot|\vocab)^L$. We sample $\hat{X}_T \sim \Pi(\cdot|\vocab)^L$, the pure noise distribution. Note that it is now sufficient to estimate $\hat{\bP}(X_{t,i_t} = a|X_{t+1,-i_t})$. Given $\hat{X}_{t+1}$, let $\hat{X}'_{t+1}$ be its masked version in the position $i_t$. Let $\hat{y} = f_{\theta}(\hat{X}'_{t+1},t)$. By Lemma~\ref{lem:main_lemma}, we can estimate $\bP(X_{t,i_t} = a | X_{t+1, -i_t} = \hat{X}_{t,-i_t})$ in step 2 of the reverse process with: 
$\hat{\bP}(X_{t,i_t} = a|X_{t+1,-i_t}) = \frac{\Pi_t(a)}{\Pi_t(\phi)}\left(\frac{1}{\hat{y}_a} - 1\right)$. %

\begin{algorithm}[ht]
\SetAlgoLined
\KwIn{Timesteps $T$, trained parameters $\theta$, model $f$, noise distributions $\{\Pi_t\}_{t=0}^T$}
\KwResult{Sample $\hat{X}_0$ from the target distribution $\Pi^*$}
Initialize $\hat{X}_T \sim \Pi_T(\cdot|\vocab)^L$\;
\For{$t \gets T-1$ \KwTo $0$}{
    Set $\hat{X}'_{t+1,j} \gets X_{t+1,j}$ for all $j \neq i_t$, $\hat{X}'_{t+1,i_t} \gets \masktoken$ \tcp*{$\masktoken$ is the mask token}
    Get $\hat{y} = f_{\theta}(\hat{X}'_{t+1},t)$\;
    Compute $\hat{\bP}(X_{t,i_t} = a|X_{t+1,-i_t}) = \frac{\Pi_t(a)}{\Pi_t(\phi)}\left(\frac{1}{\hat{y}_a} - 1\right)$ for all $a$\;
    Set $\hat{X}_{t, j} \gets \hat{X}_{t+1,j}$ for all $j \neq i$\;
    Sample $\hat{X}_{t, i_t} \sim \hat{\bP}(\cdot|X_{t+1,-i_t})$ \tcp*{can do top-$p$ sampling, etc.\,\,here}
}
\Return $\hat{X}_0$\;
\caption{Inference from a Glauber Generative Model (\ourmodel)}
\label{alg:inference_ggm}
\end{algorithm}

\subsection{Convergence of GGM}

We provide a bound on the total variation distance (TV) between the output distribution $\hat{P}_0$ of Algorithm~\ref{alg:inference_ggm} and the target distribution $P^*$ in Theorem~\ref{thm:convergence_ggm}. We provide the proof in Appendix~\ref{app:convergence_ggm}.

\begin{theorem}
\label{thm:convergence_ggm}
Suppose that $\Pi_t = \Pi$ for every $t$ and $\Pi(\phi) = 1-p$ for some $p \in (0,1)$. Suppose that our model learns $\mathbb{P}(Z_t = a|X_{t+1,-i} = x_{-i},X_{t+1,i} = a)$ perfectly from Algorithm~\ref{alg:training_ggm}. If $T \geq L\frac{\log ({L}/{\delta})}{\log ({1}/{1-p})}$ then the output distribution $\hat{P}_0$ of Algorithm~\ref{alg:inference_ggm} satisifes:
\begin{equation}
\mathsf{TV}(\hat{P}_0,P^*) \leq \delta    
\end{equation}
\end{theorem}

\subsection{Conditional Inference}
While Algorithm~\ref{alg:inference_ggm} gives unconditional generations from a target distribution $P^{*}$, we now describe a simple modification to obtain conditional generation without any additional training. 
Suppose the tokens at places $J = (j_1,\ldots,j_K)$ are conditioned to be $(c_1,\ldots,c_K)$. 
Then define $\hat{X}_T$ such that $\hat{X}_{T,i} \sim \Pi_T(\cdot|\vocab)$ for $i \not\in J$ and $\hat{X}_{T,j_k}=c_k$ for all $k = 1,\ldots,K$. We run Algorithm~\ref{alg:inference_ggm} as before, but do not update the tokens in the positions given by $J$.
Using this, our models can generate tokens under a variety of different zero-shot control settings, like prefix/suffix completion, arbitrary infilling, and arbitrary lexical constraints.
For example, we provide examples of conditional generations for language in Appendix~\ref{appsub:conditional}, where our models infill the middle $512$ tokens given the first and last $256$ tokens.
Figure~\ref{fig:infilling_celebahq_4x4} shows examples of conditional generations from our model on CelebA-HQ where our model is able to infill a variety of different pixel-space masks. We provide more details regarding obtaining the prompt tokens and positions from the pixel masks in Appendix~\ref{app:conditional_inference}.

\subsection{Architecture}
\label{subsec:architecture}

Following \citep{Lou2023DiscreteDL} closely, we design $f_\theta$ to be a transformer model, based on the DiT model family \citep{Peebles2022ScalableDM}.
We find that encoding time using the \texttt{adaLN-Zero} approach used in \citep{Peebles2022ScalableDM} improves the convergence time of our models significantly.
Like \citep{Lou2023DiscreteDL} and \citep{Gulrajani2023LikelihoodBasedDL}, we too use rotary embeddings \citep{Su2021RoFormerET} to encode position.
When the $i$\textsuperscript{th} token to the model is masked (using $\omega$), we pass the final layer representation for the $i$\textsuperscript{th} token to the classifier that returns $\hat{y} \in [0, 1]^{|\vocab|}$ after applying the sigmoid activation on the logits. 
We provide hyperparameters and more architectural details in Appendix~\ref{app:hyperparams}.

\section{Results}

\subsection{Language Generation}
We train our models on the OpenWebText dataset~\citep{Gokaslan2019OpenWeb} and evaluate on language generation. We compare with recent diffusion-based models for language (Plaid~\citep{Gulrajani2023LikelihoodBasedDL}, SEDD-medium~\citep{Lou2023DiscreteDL}, and MDLM~\citep{Sahoo2024SimpleAE}) and GPT2 models of different sizes in Table~\ref{tab:genppl}.
Following \citep{Lou2023DiscreteDL,Han2022SSDLMSS,Dieleman2022ContinuousDF}, we evaluate unconditional generations of length $1024$ tokens using generative perplexity (Gen. PPL) measured by a different, larger, autoregressive model like GPT-Neo-2.7B~\citep{Black2021GPTNeoLS}.
Our model, with $387$M parameters, outperforms SEDD-medium ($424$M parameters) and MDLM, the previous state-of-the-art discrete diffusion models.
Our model performs competitively with Plaid ($1.3$B parameters) with $3.4\times$ less parameters and after being trained on a smaller dataset (OpenWebText instead of OpenWebText2~\citep{owt2}).
However, all diffusion-based approaches lie significantly behind GPT2 models of comparable sizes.
Following \citep{Wang2019BERTHA}, we include a MLM-based baseline which implements time independent Glauber dynamics as described in Section~\ref{sec:glauber} to compare to time dependent Glauber dynamics of \ourmodel. We begin with a noisy sequence of tokens, and take multiple token-by-token passes at the token sequence. At each position, we mask the corresponding token and pass the resulting sequence to a cased BERT-large model ($334$M parameters) \citep{Devlin2019BERTPO} to re-sample token value in this position. We generate and evaluate sequences of length $512$ for BERT (maximum length that BERT allows). This approach is able to generate short phrases but is unable to generate text that remains coherent over $512$ tokens and gets poor perplexity. 
We provide the exact sampling algorithm in Appendix~\ref{app:bertgibbs}.
When sampling from GPT2 models, we force the models to generate text of length $1024$ by setting the probability of the end-of-text token to $0$.
For Plaid, SEDD-medium, and MDLM, we use the default sampling algorithms as used by the authors in \citep{Gulrajani2023LikelihoodBasedDL}, \citep{Lou2023DiscreteDL}, and \citep{Sahoo2024SimpleAE} respectively.
We include example generations from our model in Appendix~\ref{app:langgens}. Hyperparameters and other training details are provided in Appendix~\ref{app:hyperparams}.

\begin{table}[ht]
\centering
\caption{Generative Perplexities evaluated over $1024$ unconditional generations. We do not evaluate a larger model (e.g., GPT2-xl) with a smaller model (e.g., GPT2-small).}
\label{tab:genppl}
\resizebox{\textwidth}{!}{ %
\begin{tabular}{@{}llcccc@{}}
\toprule
\multicolumn{3}{c}{\textbf{Evaluation Model}} &
  \textbf{\begin{tabular}[c]{@{}c@{}}GPT2-large\\ ($774$M)\end{tabular}} &
  \textbf{\begin{tabular}[c]{@{}c@{}}GPT2-xl\\ ($1.6$B)\end{tabular}} &
  \textbf{\begin{tabular}[c]{@{}c@{}}GPT-neo\\ ($2.7$B)\end{tabular}} \\ \midrule
 \textbf{\begin{tabular}[l]{@{}l@{}}Evaluated\\ Model\end{tabular}} &
  {\textbf{\begin{tabular}[l]{@{}l@{}}Sampling\\ Algorithm\end{tabular}}} &
  \textbf{\begin{tabular}[c]{@{}c@{}}Total\\ Params\end{tabular}} &
  \textbf{\begin{tabular}[c]{@{}c@{}}Gen.\\ PPL ($\downarrow$)\end{tabular}} &
  \textbf{\begin{tabular}[c]{@{}c@{}}Gen.\\ PPL ($\downarrow$)\end{tabular}} &
  \textbf{\begin{tabular}[c]{@{}c@{}}Gen.\\ PPL ($\downarrow$)\end{tabular}} \\
\midrule
 \multicolumn{6}{c}{\textbf{Autoregressive Models}}\\
\midrule
GPT2-medium \citep{Radford2019LanguageMA} & \begin{tabular}[l]{@{}l@{}}top-$p$, $p=0.8$\\$L = 1024, T = 1024$\end{tabular}     & $345$M & $12.4$ & $13.0$ & $14.5$ \\ \midrule
GPT2-large \citep{Radford2019LanguageMA}  & \begin{tabular}[l]{@{}l@{}}top-$p$, $p=0.8$\\$L = 1024, T = 1024$\end{tabular} & $774$M & $-$    & $6.5$  & $7.4$  \\ \midrule
GPT2-xl \citep{Radford2019LanguageMA}    &  \begin{tabular}[l]{@{}l@{}}top-$p$, $p=0.8$\\$L = 1024, T = 1024$\end{tabular}  & $1.6$B & $-$    & $-$    & $6.8$  \\ \midrule
 \multicolumn{6}{c}{\textbf{Masked Language Models}}\\
\midrule
\begin{tabular}[l]{@{}l@{}}BERT-large w/\\ Gibbs sampling \citep{Wang2019BERTHA}\end{tabular}    & \begin{tabular}[l]{@{}l@{}} top-$p$, $p=0.8$\\ $L=512, T=2048$  \end{tabular}   & $334$M & $487.0$    & $487.5$    & $488.7$  \\ \midrule
\begin{tabular}[l]{@{}l@{}}BERT-large w/\\ Gibbs sampling \citep{Wang2019BERTHA}\end{tabular}    & \begin{tabular}[l]{@{}l@{}} top-$p$, $p=0.8$\\ $L=512, T=65536$  \end{tabular}   & $334$M & $28.7$    & $28.4$    & $26.4$  \\ \midrule
 \multicolumn{6}{c}{\textbf{Diffusion Models}}\\
\midrule
Plaid \citep{Gulrajani2023LikelihoodBasedDL}       & \begin{tabular}[l]{@{}l@{}}$\tau=0.9$ as per \citep{Gulrajani2023LikelihoodBasedDL}\\ $L=1024, T=4096$\end{tabular} & $1.3$B & $19.7$ & $19.7$ & $17.9$ \\ \midrule
SEDD-medium \citep{Lou2023DiscreteDL} & \begin{tabular}[l]{@{}l@{}}default as per \citep{Lou2023DiscreteDL}\\ $L=1024, T=2048$\end{tabular}          & $424$M & $27.3$ & $28.0$ & $25.2$ \\ \midrule
MDLM \citep{Sahoo2024SimpleAE} & \begin{tabular}[l]{@{}l@{}}default as per \citep{Sahoo2024SimpleAE}\\ $L=1024, T=1000$\end{tabular}          & $170$M & $44.2$ & $45.4$ & $40.9$ \\ \midrule
\ourmodel{} (\textbf{ours})  & \begin{tabular}[l]{@{}l@{}}top-$p$, $p=0.8$ \\ $L = 1024, T = 4096$\end{tabular}    & $387$M & $19.5$ & $19.9$ & $18.0$ \\ \bottomrule
\end{tabular}
}
\end{table}

\begin{remark}
    During sampling, we use top-$p$ sampling, whereas SEDD and MDLM do not. However, recent work \citep[Section 5]{zheng2024masked} shows that when using the Gumbel-max trick for sampling from a categorical distribution (as used by SEDD and MDLM, but not by our method), using FP32 precision instead of FP64 behaves equivalently to lowering the temperature during sampling, as a result of truncation. When this is fixed using FP64 precision, the default samplers for SEDD and MDLM result in a generative perplexity of about $100$ (more than $3\times$ the reported values).
\end{remark}

\begin{figure}[ht]
\centering
\begin{subfigure}[t]{0.56\textwidth}
    \centering
    \includegraphics[width=\textwidth]{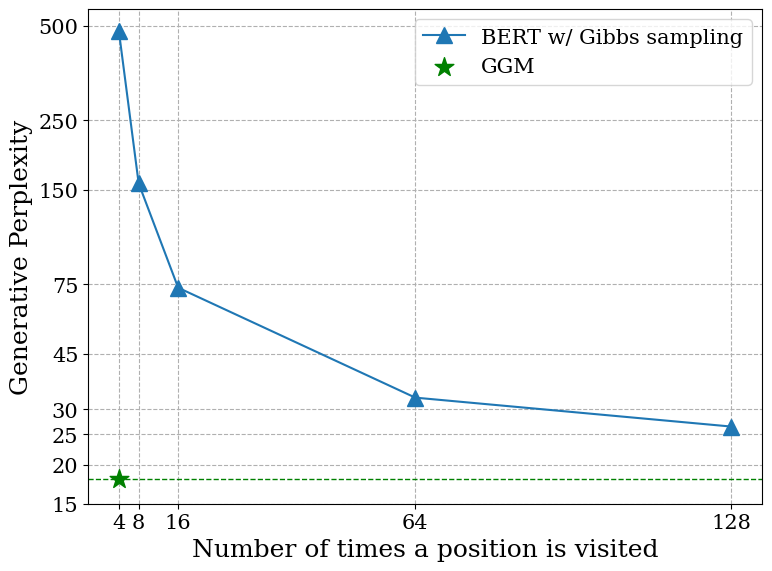}
    \caption{Generative perplexity of GGM vs BERT-large with Gibbs sampling. Inference is run for $T = KL$ timesteps, where $K$ is the number of visits per position (plotted on the $x$-axis).}
    \label{subfig:genppl_bert_vs_ggm}
\end{subfigure}
\hfill
\begin{subfigure}[t]{0.41\textwidth}
    \centering
    \includegraphics[width=\textwidth]{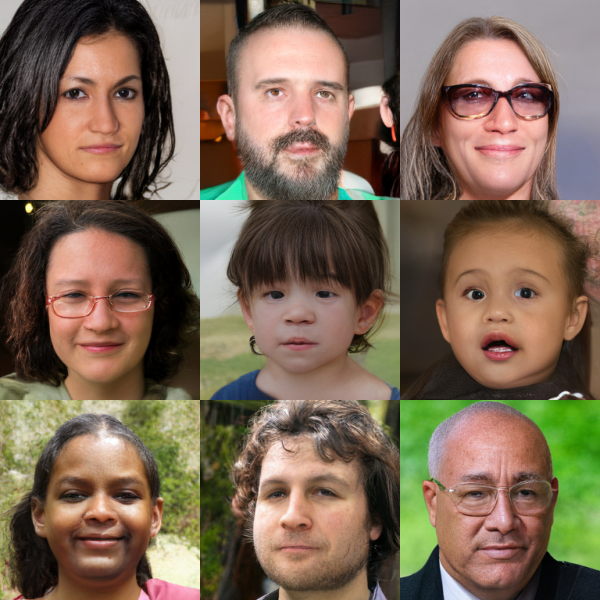}
    \caption{Examples of $256 \times 256$ unconditional generations from our model on FFHQ. We show more examples in Appendix~\ref{app:more_uncond_ffhq}.}
    \label{subfig:ffhq_3x3}
\end{subfigure}
\end{figure}

\subsection{Image Generation}
We tokenize images and de-tokenize tokens using an off-the-shelf VQGAN-based tokenizer used by the MUSE model \citep{Chang2023MuseTG}. 
The tokenizer is trained on a large, general image corpus with a downsampling ratio $f = 8$. Thus, a $256\times 256$ corresponds to $32\times 32 = 1024$ tokens.
We report the FID~\citep{Parmar2022OnAR} values in Table~\ref{tab:fidcelebahqffhq} for unconditional $256\times 256$ image synthesis on the CelebA-HQ dataset \citep{Karras2017ProgressiveGO} and the FFHQ dataset \citep{Karras2018ASG}.
On CelebA-HQ, our model outperforms several autoencoder-based baselines by a large margin and beats a much larger $801$M parameter GPT2-based autoregressive model that also works with VQGAN tokens \citep{Esser2020TamingTF}.
We also outperform VQ-DDM~\citep{Hu2021GlobalCW}, another model that applies discrete diffusion similar to \citep{hoogeboom2021argmax} in the latent space of a VQGAN. On FFHQ, our method outperforms the autoencoder baselines, the baseline version of StyleSwin~\citep{Zhang2021StyleSwinTG}, the PixelSNAIL~\citep{Chen2017PixelSNAILAI} model on VQGAN tokens \citep{Esser2020TamingTF}, and is competitive with other kinds of models like BigGAN~\citep{Brock2018LargeSG}.%

\textbf{Comparison with SoTA methods:} We find a substantial gap in FID between our approach and SoTA methods like LDMs~\citep{Rombach2021HighResolutionIS}, VQ-Diffusion \citep{Gu2021VectorQD}, and StyleSwin~\citep{Zhang2021StyleSwinTG}.
Based on the findings in MAGVIT-v2~\citep{Yu2023LanguageMB}, we believe that this gap can be brought down significantly, if not completely eliminated, by the use of more expressive and dataset-specific tokenizers.
This can bring down the required sequence length, lead to more expressive and relevant tokens in the vocabulary, and thereby ease downstream sequence modeling.
For example, the dataset-specific tokenizers used in \citep{Esser2020TamingTF} and \citep{Gu2021VectorQD}) require only $256$ tokens for CelebA-HQ and FFHQ, while the general-purpose tokenizer used by our model leads to $1024$ tokens.
StyleSwin~\citep{Zhang2021StyleSwinTG} uses several techniques to boost generation quality. More specifically, the baseline StyleSwin model gets an FID of $15.0$ on FFHQ (in comparison, our method achieves $12.5$). 
However, with techniques such as style injection, double attention, wavelet discriminator, sinusoidal positional encoding at each generation scale, and balanced consistency regularization, StyleSwin is able to achieve an FID of $2.8$ (Table $5$ of \citep{Zhang2021StyleSwinTG}).  
We show unconditionally generated examples from our model trained on CelebA-HQ in Figure~\ref{fig:celebahq_4x4} and from our model trained on FFHQ in Figure~\ref{subfig:ffhq_3x3} and Appendix~\ref{app:more_uncond_ffhq}.
In Appendix~\ref{app:nncelebahq}, we show the nearest neighbors for our generations from the CelebA-HQ training data.
In Figure~\ref{fig:celebahq_forward_evol}, we visualize the forward noising process applied on an real image.
In Figure~\ref{fig:celebahq_reverse_evol}, we visualize the reverse denoising process leading to a generation.

\begin{table}[ht]
\centering
\caption{FID scores computed using $50$K $256\times 256$ unconditionally generated images on CelebA-HQ and FFHQ. Baseline results are replicated from \citep{Esser2020TamingTF,Jiang2021TransGANTP, vahdat2021score,Hu2021GlobalCW,Jiang2021TransGANTP,Gu2021VectorQD}. \textsuperscript{\ddag} denotes that the model uses a dataset-specific tokenizer / latent space. We provide more details in Appendix~\ref{app:hyperparams}.}
\label{tab:fidcelebahqffhq}
\resizebox{\textwidth}{!}{ %
\begin{tabular}{@{}llclc@{}}
\toprule
\multirow{2.5}{*}{\textbf{Model Family}} & \multicolumn{2}{c}{\textbf{CelebA-HQ}} & \multicolumn{2}{c}{\textbf{FFHQ}}\\ \cmidrule(lr){2-3} \cmidrule(lr){4-5}
& \textbf{Model} & \textbf{FID ($\downarrow$)} & \textbf{Model} & \textbf{FID ($\downarrow$)} \\ \midrule
\multirow{4}{*}{\begin{tabular}[l]{@{}l@{}}\textbf{Flow \&}\\ \textbf{Autoencoder}\end{tabular}}
& GLOW \citep{Kingma2018GlowGF} & $69.0$ & VDVAE ($\tau=0.8$) \citep{Child2020VeryDV} & $29.8$  \\ \cmidrule(lr){2-3} \cmidrule(lr){4-5}
& Style ALAE \citep{Pidhorskyi2020AdversarialLA} & $19.2$ & VDVAE ($\tau=1.0$) \citep{Child2020VeryDV} & $33.5$ \\ \cmidrule(lr){2-3} \cmidrule(lr){4-5}
& DC-VAE \citep{Parmar2020DualCG}  & $15.8$ & VDVAE ($\tau=0.9$) \citep{Child2020VeryDV} & $28.5$ \\ \midrule
\multirow{4}{*}{\textbf{GAN}}
& TransGAN \citep{Jiang2021TransGANTP} & $10.3$ & BigGAN \citep{Brock2018LargeSG} & $12.4$ \\ \cmidrule(lr){2-3} \cmidrule(lr){4-5}
& PGGAN \citep{Karras2017ProgressiveGO} & $8.0$ & StyleSwin (baseline) \citep{Zhang2021StyleSwinTG} & $15.0$ \\ \cmidrule(lr){2-3} \cmidrule(lr){4-5}
& StyleSwin \citep{Zhang2021StyleSwinTG} & $3.3$ & StyleSwin \citep{Zhang2021StyleSwinTG} & $2.8$ \\ \midrule
\multirow{2.5}{*}{\begin{tabular}[l]{@{}l@{}}\textbf{Continuous}\\ \textbf{Diffusion}\end{tabular}}
& \textsuperscript{\ddag}LSGM \citep{vahdat2021score} & $7.2$ & VE \citep{JolicoeurMartineau2021GottaGF} & $15.7$ \\ \cmidrule(lr){2-3} \cmidrule(lr){4-5}
& \textsuperscript{\ddag}LDM-4 \citep{Rombach2021HighResolutionIS} & $5.1$ & \textsuperscript{\ddag}LDM-4 \citep{Rombach2021HighResolutionIS} & $5.0$  \\ \midrule
\multirow{2.5}{*}{\textbf{Autoregressive}}
& \multirow{2.5}{*}{\begin{tabular}[l]{@{}l@{}}\textsuperscript{\ddag}VQGAN + GPT2 \citep{Esser2020TamingTF}\end{tabular}} & \multirow{2.5}{*}{$10.2$} & \begin{tabular}[l]{@{}l@{}}\textsuperscript{\ddag}VQGAN + PixelSNAIL \citep{Chen2017PixelSNAILAI}\end{tabular} & $21.9$\\ \cmidrule{4-5}
& & & \begin{tabular}[l]{@{}l@{}}\textsuperscript{\ddag}VQGAN + GPT2 \citep{Esser2020TamingTF}\end{tabular} & $9.6$\\\midrule
\multirow{5}{*}{\begin{tabular}[l]{@{}l@{}}\textbf{Discrete}\\ \textbf{Diffusion}\end{tabular}}
& \textsuperscript{\ddag}VQ-DDM (w/o ReFiT) \citep{Hu2021GlobalCW} & $22.6$ & 
\multirow{2.5}{*}{\textsuperscript{\ddag}VQ-Diffusion \citep{Gu2021VectorQD}} & \multirow{2.5}{*}{$6.3$}  \\ \cmidrule(lr){2-3}
& \textsuperscript{\ddag}VQ-DDM (w/ ReFiT) \citep{Hu2021GlobalCW} & $13.2$ & & \\ \cmidrule(lr){2-3} \cmidrule(lr){4-5}
& \begin{tabular}[l]{@{}l@{}}MUSE tokens \citep{Chang2023MuseTG}\\ + \ourmodel{} \textbf{(ours)}\end{tabular} & $9.8$ & \begin{tabular}[l]{@{}l@{}}MUSE tokens \citep{Chang2023MuseTG}\\ + \ourmodel{} \textbf{(ours)}\end{tabular} & $12.5$ \\ \bottomrule
\end{tabular}
}
\end{table}

\begin{figure}[ht]
\centering

\begin{subfigure}[ht]{1\textwidth}
    \centering
    \includegraphics[width=1\textwidth]{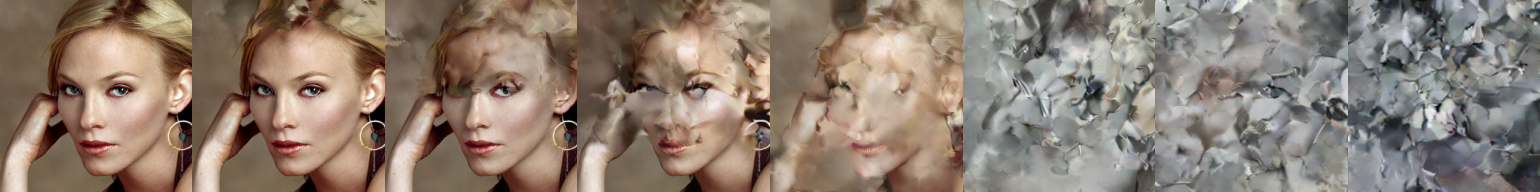}
    \caption{Noising process at different timesteps visualized using the MUSE tokenizer. From left to right, the timesteps are $t=0, 256, 512, 768, 1024, 2048, 3072, 4095$.}
    \label{fig:celebahq_forward_evol}
\end{subfigure}

\begin{subfigure}[ht]{1\textwidth}
    \centering
    \includegraphics[width=1\textwidth]{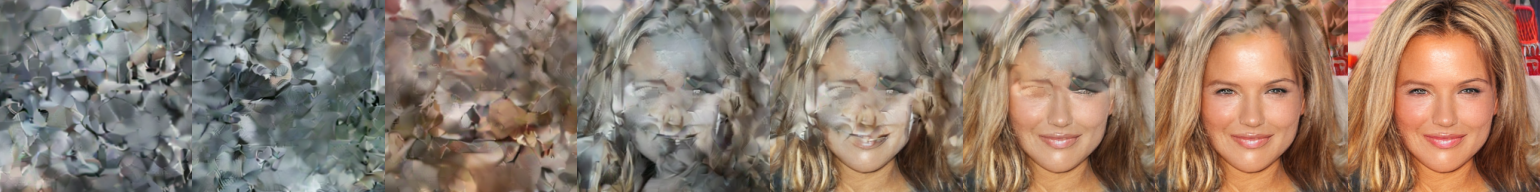}
    \caption{Denoising process at different times visualized using the MUSE tokenizer. From left to right, the timesteps are $t=4095, 3072, 2048, 1024, 768, 512, 256, 0$.}
    \label{fig:celebahq_reverse_evol}
\end{subfigure}

\end{figure}

\begin{figure}[ht]
\centering
\begin{subfigure}[t]{0.48\textwidth}
    \centering
    \includegraphics[width=1\textwidth]{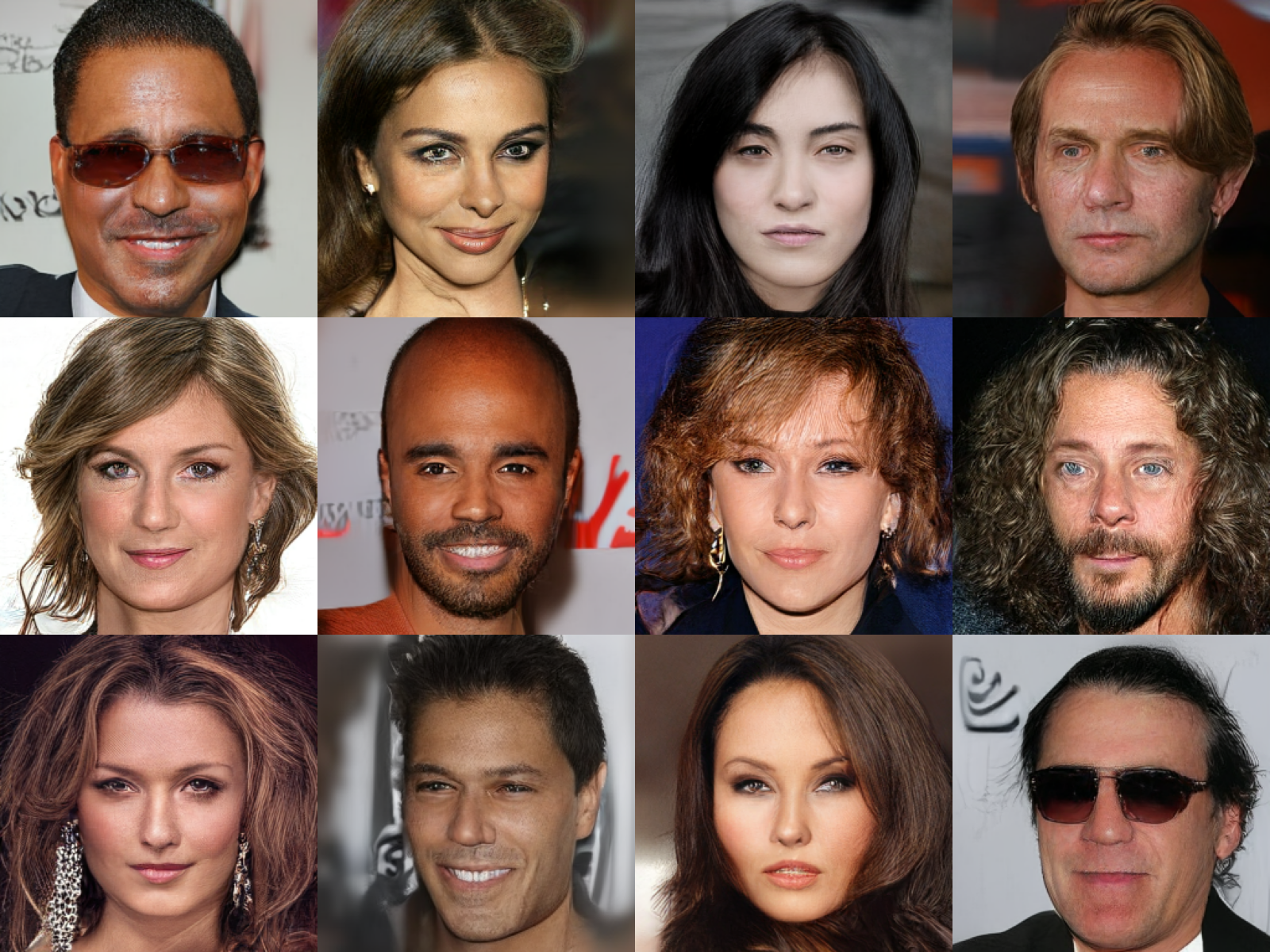}
    \caption{Examples of $256 \times 256$ unconditional generations from our model on CelebA-HQ.}
    \label{fig:celebahq_4x4}
\end{subfigure}
\hfill
\begin{subfigure}[t]{0.48\textwidth}
    \centering
    \includegraphics[width=1\textwidth]{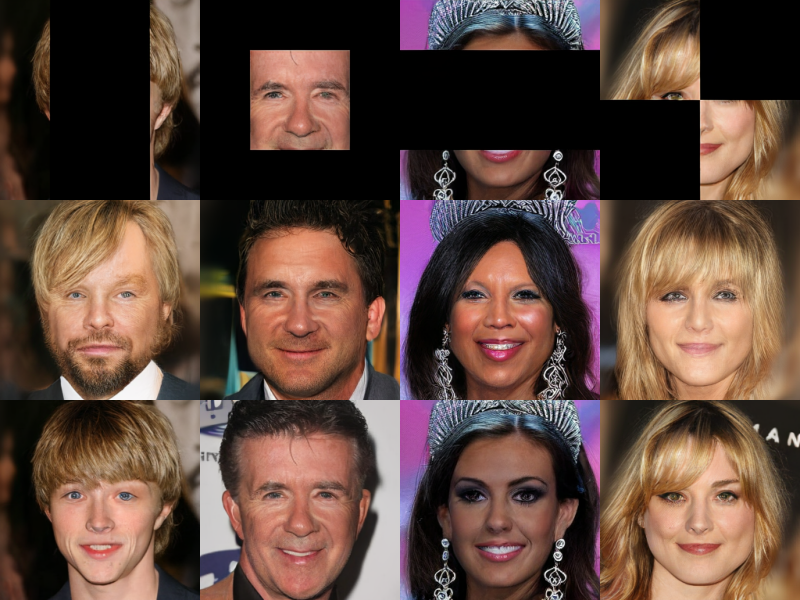}
    \caption{Comparison of $256 \times 256$ conditional generations (middle) from our model on CelebA-HQ given masked inputs (top) with the ground-truth images (bottom).}
    \label{fig:infilling_celebahq_4x4}
\end{subfigure}
\end{figure}

\section{Discussion and Limitations}
\label{sec:discandlim}

We propose a novel conceptual framework for discrete diffusion and evaluate its performance on generative modeling tasks for language and images. 
We show that our proposed approach outperforms existing discrete diffusion models for language generation and demonstrates strong performance for image generation without using dataset-specific image tokenizers. 
Our approach is theoretically principled and provides a promising new direction for multi-modal generative modeling. Future works can explore various applications and extensions like protein modeling \citep{Meshchaninov2024DiffusionOL,Campbell2024GenerativeFO} and generalist agents \citep{Reed2022AGA}.  In its current state, we find that discrete diffusion models lag behind state of the art methods like autoregressive LLMs for language, and GANs and continuous diffusion models for image generation. We note that state of the art image generation algorithms are a product of years of cutting edge research, and utilize numerous additional customized techniques (such as data-specific tokenizers, style injection, etc.) that can improve performance dramatically (for example, Table $5$ of \citep{Zhang2021StyleSwinTG}). We believe that with further research and refinement, discrete diffusion models can obtain even better results.
 
% \section{Societal Impact}
% \label{sec:societalimpact}
% Generative models can potentially have several benefits (enabling creative expression, content creation, several downstream tasks like translation) and downsides (such as misinformation, hallucination, biases) for society. Therefore, care is required in usage, training and deployment of such models -- such as adding safeguards to generated output. Our current work presents a conceptual framework and works with popular, curated, publicly available datasets. We do not intend to release our model for public use and thus we do not believe our model possesses any tangible societal impact. 

\bibliography{refs}
\bibliographystyle{iclr2025_conference}

\newpage
\appendix

\setcounter{algocf}{1}
\renewcommand{\thealgocf}{A\arabic{algocf}}  

\section{Proof of Lemma~\ref{lem:fwd_converge}}
\label{app:fwd_converge}

\begin{proof}
We consider Proposition 4.7 in \cite{levin2017markov}, which establishes a coupling characterization for the total variation distance. This implies that for any random variable $Y \sim \Pi(\cdot|\vocab)^{\otimes L}$ which is jointly distributed with $X_0,\dots,X_T$, we must have:
\begin{equation}\label{eq:coupling_TV}
    \mathsf{TV}(\mathsf{Law}(X_T),\Pi(\cdot|\vocab)^{\otimes L}) \leq \bP(X_T \neq Y)
\end{equation}

We construct such a joint distribution/ coupling below. Let $X_0,\dots,X_T$ be obtained using the forward process. Consider the event $E_{T,i} := \{\exists t \ :\ 0\leq  t \leq T-1,  \ Z_t \neq \phi \text{ and } i_t = i\}$ and the event $E_T = \cap_{i=0}^{L-1} E_{T,i}$. This means that each coordinate of $X_0$ has been noised at-least once in time $T$. Define $\tau(i) = \sup\{t \ :\ 0\leq  t \leq T-1,  \ Z_t \neq \phi \text{ and } i_t = i\}$. We define the random variable $Y$ as follows:
\begin{equation}
    Y_{i} = \begin{cases}
    Z_{\tau(i)} \text{ under the event } E_{T,i}
    \sim \Pi(\cdot|\vocab) \text{ independently otherwise}
    \end{cases}
\end{equation}
It is easy to show that $Y \sim \Pi(\cdot|\vocab)^{\otimes L}$. By definition of $Y$, $\{Y \neq X_T\} \subseteq E_T^{\complement}$. Therefore, we conclude from Equation~\eqref{eq:coupling_TV} that:
\[\mathsf{TV}(\mathsf{Law}(X_T),\Pi(\cdot|\vocab)^{\otimes L}) \leq \bP(E_T^{\complement}) \leq L(1-\epsilon)^{\lfloor T/L\rfloor} \stackrel{T\to \infty}{\rightarrow} 0\]

\end{proof}

\section{Proof of Lemma~\ref{lem:main_lemma}}
\label{sec:main_lemma_proof}

\begin{proof}
Note the following relationships between events
\begin{enumerate}[noitemsep,topsep=0pt]
    \item  $\{ X_{t+1} = x\}\cap\{ Z_t=\phi\} = \{Z_t = \phi\}\cap\{X_t = x\}$
    \item  For $z \neq \phi$ and $x \in \vocab^L$, we have:
    \begin{equation*}
        \{X_{t+1} = x\}\cap\{Z_t = z\} = \begin{cases} \{X_{t,-i_t} = x_{-i_t}\}\cap\{Z_t = z\} \quad \text{if } x_{i_t} = z \\
        \emptyset \quad \text{otherwise}
        \end{cases}
    \end{equation*}
    \item For any $ x \in \vocab^L$, \[\{X_{t+1} = x\} = (\{X_t = x\}\cap\{Z_t = \phi\})\cup(\{X_{t,-i_t} = x_{-i_t}\}\cap\{Z_t = x_{i_t}\})\]
\end{enumerate} 

From the relationships above, we have:
\begin{equation}
\label{eqn:score}
\begin{split}
    \expectation[\indicator_{Z_t = z} | X_{t+1}=x] &= \bP(\indicator_{Z_t=z}=1|X_{t+1}=x)\\
    &= \frac{\bP(Z_t=z, X_{t+1}=x)}{\bP(X_{t+1}=x)}\\
    &=
    \begin{cases}
    0 &\text{ if } z \neq \phi \text{ and } z \neq x_{i_t}\\
    \frac{\bP(Z_t=\phi)\bP(X_{t}=x)}{\bP(X_{t+1}=x)} & \text{ if } z=\phi\\
    \frac{\bP(Z_t=x_{i_t})\bP(X_{t, -i_t}=x_{-i_t})}{\bP(X_{t+1}=x)} & \text{ if } z=x_{i_t}
    \end{cases}\\
    \bP(X_{t+1}=x) &= \bP(Z_t = \phi)\bP(X_t = x) + \bP(Z_t = x_{i_t})\bP(X_{t, -i_t} = x_{-i_t})
\end{split}
\end{equation}

Combining these relationships, we have:
\begin{equation}
\begin{split}
    \bP(Z_t=x_{i_t}|X_{t+1} = x) &= \frac{1}{1 + \frac{\bP(Z_t = \phi)}{\bP(Z_t=x_{i_t})}\bP(X_{t, i_t} = x_{i_t} | X_{t, -i_t}=x_{-i_t})}\\
   \implies \bP(X_{t, i_t} = x_{i_t} | X_{t, -i_t}=x_{-i_t}) &= \frac{\bP(Z_t=x_{i_t})}{\bP(Z_t = \phi)}\left(\frac{1}{\bP(Z_t=x_{i_t}|X_{t+1}=x)} - 1\right)
\end{split}
\end{equation}
\end{proof}

\section{Proof of Theorem~\ref{thm:convergence_ggm}}
\label{app:convergence_ggm}

\begin{proof}
Suppose we initialize $\hat{X}_T \sim P_T$ in Algorithm~\ref{alg:inference_ggm} instead of $\Pi_T(\cdot|\vocab)^L$. By Lemma~\ref{lem:perfect_reversal} and Lemma~\ref{lem:main_lemma} we conclude that if the model learns $\mathbb{P}(Z_t = a|X_{t+1,-i} = x_{-i},X_{t+1,i} = a)$ perfectly, then the output of Algorithm~\ref{alg:inference_ggm} satisfies $\hat{X}_0 \sim P^*$. That is, it yields a sample from the exact target distribution. However, we do not initialize with $\hat{X}_T \sim P_T$, but with $\hat{X}_T \sim \Pi_T(\cdot|\vocab)^L$ and this yields an output $\hat{X}_0 \sim \hat{P}_0 \neq P^*$.

Notice that Algorithm~\ref{alg:inference_ggm} is a Markov Process whose output distribution is $P^*$ if the input distribution is $P_T$ and the output distribution is $\hat{P}_0$ if the input distribution if $\Pi(\cdot|\vocab)^L$. Therefore we can write down the data processing inequality in this case as:

\begin{align}
    \mathsf{TV}(\hat{P}_0,P^*) &\leq \mathsf{TV}(P_T,\Pi(\cdot|\vocab)^L) \nonumber \\
    &\leq L(1-p)^{-\lfloor T/L\rfloor}
\end{align}
 In the second step, we have applied Lemma~\ref{lem:fwd_converge}. Setting the RHS above $\leq \delta$, we conclude the result.

\end{proof}

\newpage
\section{Zero-shot Conditional Generation with \ourmodel{}}
\label{app:conditional_inference}

For image data, we can mask/corrupt the inputs either in the token space or in the original pixel space.

\subsection{Masking in the token space}
Given a tokenized image $X_0 \sim P^*$, we generate masked images by setting $X_{0, a:b} \sim \Pi(\cdot|\mathcal{X})$ and run the conditional inference as specified in Algorithm~\ref{alg:cond_inference_ggm}. Here, $a:b$ denotes the slice of tokens from index $a$ to index $b$. 
In Figure~\ref{fig:celebahq_conditional} we show examples with $a=256$ and $b = 768$ (that is, $X_{0,:256}$ and $X_{0,768:}$ form our prompt tokens $X^*$ and the prompt positions $J = \{0,\ldots,255,768,\ldots,1023\}$).

\subsection{Masking in the pixel space}
Instead of masking in the token space, we can also mask in the original pixel space, pre-tokenization.
In this case, we simulate the previous setting using the available pixel mask.
Given an image $Y \in [0, 1]^{H\times W}$ and a mask $M \in \{0, 1\}^{H \times W}$, we generate two versions of the masked image $Y_1$ with masked pixels set to $0$ and $Y_2$ with the masked pixels set to $1$.
We then tokenize these two versions and compare the tokens to get the mask in the token space.
That is, if $X_1, X_2 \in \vocab^L$ are obtained after tokenizing $Y_1, Y_2$, we set $J = \{j \in [L] : X_{1,j}=X_{2,j}\}$ and run Algorithm~\ref{alg:cond_inference_ggm} with this $J$ and prompt tokens $\{X^*_j = X_{1,i} = X_{2,j}\}_{j \in J}$.
We show results of this approach for a variety of pixel masks in Figure~\ref{fig:infilling_celebahq_4x4} and Figure~\ref{fig:celebahq_infilling_more}.

\begin{algorithm}[ht]
\SetAlgoLined
\KwIn{Timesteps $T$, trained parameters $\theta$, model $f$, noise distribution $\Pi_t$, prompt positions $J$, prompt tokens $\{X^*_{j}\}_{j\in J}$}
\KwResult{Sample $\hat{X}_0$ from the target distribution $\Pi^*$ with $\hat{X}_{0, j} = X^*_j$ for all $j \in J$}
Initialize $\hat{X}_T \sim \Pi_T(\cdot|\vocab)^L$, fix $\hat{X}_{T,j} = X^*_j$ for all $j \in J$\; 
\For{$t \gets T-1$ \KwTo $0$}{
    \eIf{$i_t \in J$}{
        Set $\hat{X}_{t, i_t} \gets X^*_{i_t}$\;
    }{
        Set $\hat{X}'_{t+1,j} \gets X_{t+1,j}$ for all $j \neq i_t$, $\hat{X}'_{t+1,i_t} \gets \masktoken$\;
        Get $\hat{y} = f_{\theta}(\hat{X}'_{t+1},t)$\;
        Compute $\hat{\bP}(X_{t,i_t} = a|X_{t+1,-i_t}) = \frac{\Pi_t(\hat{X}_{t+1,i_t})}{\Pi_t(\phi)}\left(\frac{1}{\hat{y}_a} - 1\right)$\;
        Set $\hat{X}_{t, j} \gets \hat{X}_{t+1,j}$ for all $j \neq i$\;
        Sample $\hat{X}_{t, i_t} \sim \hat{\bP}(X_{t,i_t} = a|X_{t+1,-i_t})$\;
    }
}
\Return $\hat{X}_0$\;
\caption{Conditional Inference from a Glauber Generative Model (\ourmodel{})}
\label{alg:cond_inference_ggm}
\end{algorithm}

\begin{figure}[ht]
    \centering
    \includegraphics[width=1\textwidth]{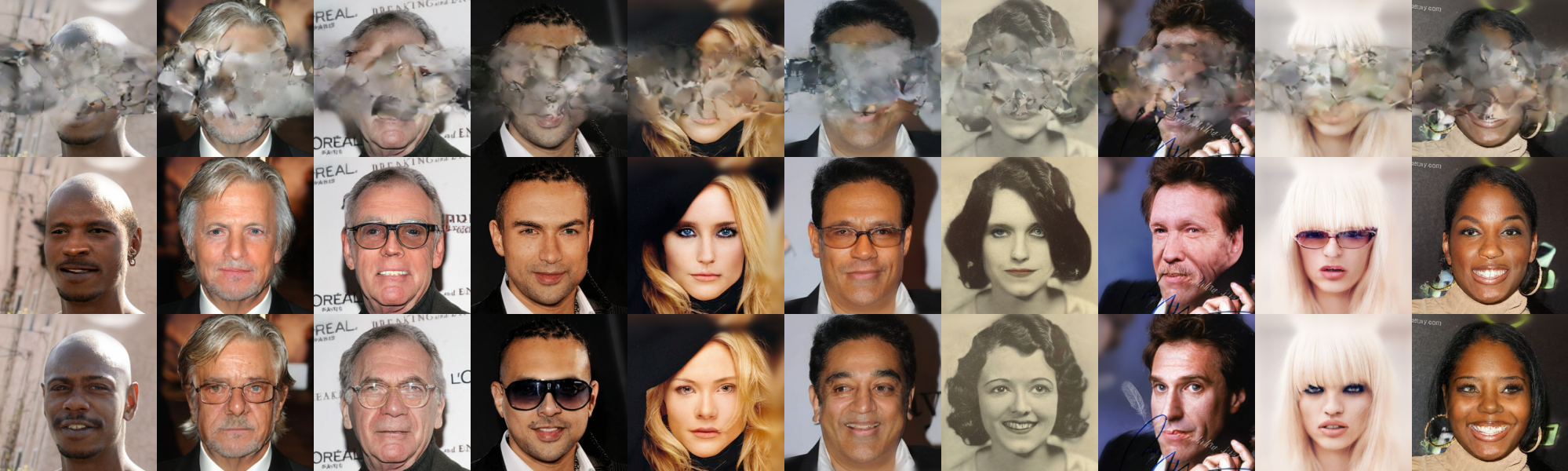}
    \caption{Comparison of $256 \times 256$ conditional generations (middle) from our model on CelebA-HQ given masked inputs in the token space (top) with the ground-truth images (bottom).}
    \label{fig:celebahq_conditional}
\end{figure}

\begin{figure}[ht]
    \centering
    \includegraphics[width=1\textwidth]{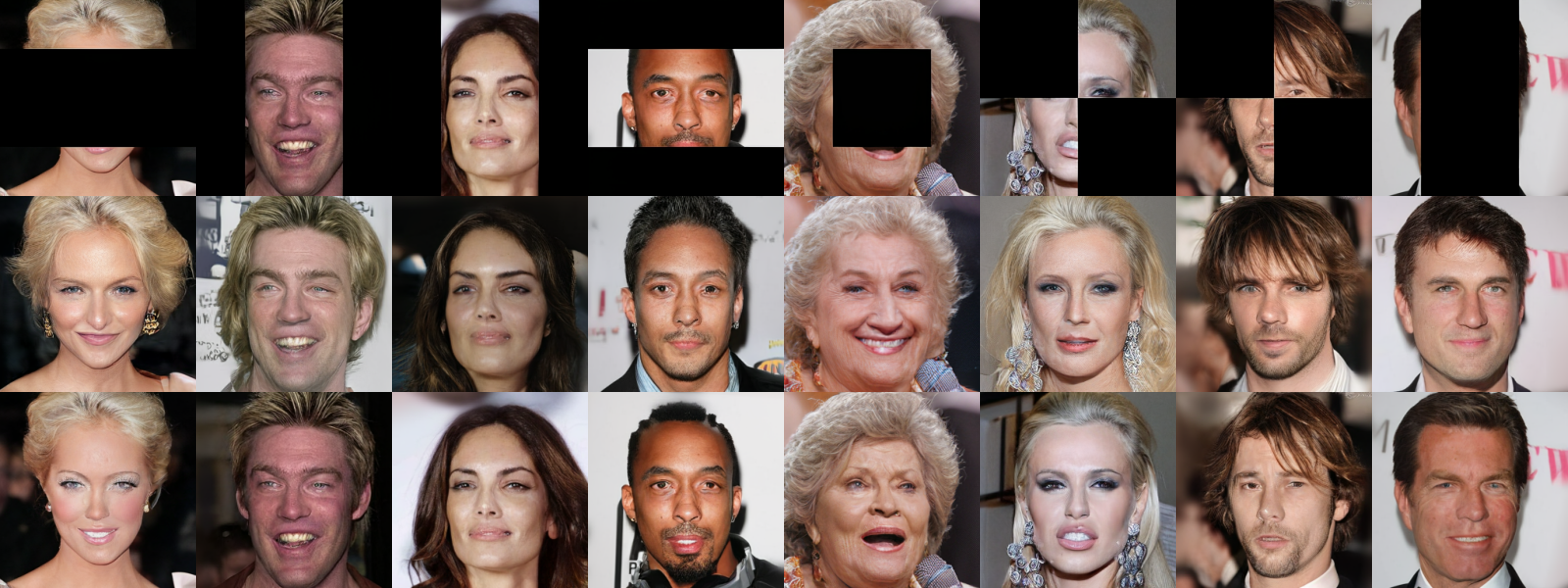}
    \caption{Comparison of $256 \times 256$ conditional generations (middle) from our model on CelebA-HQ given masked inputs in the original pixel space (top) with the ground-truth images (bottom).}
    \label{fig:celebahq_infilling_more}
\end{figure}

\newpage
\section{Nearest Neighbors for CelebA-HQ Generations}
\label{app:nncelebahq}

Since the FID is computed over the training dataset, models that overfit on the training data can achieve very good FID scores \citep{jiralerspong2024feature}.
To qualitatively demonstrate this is not the case with our models, we retrieve the nearest neighbors (using InceptionV3 embeddings) for generated images in Figure~\ref{fig:nn_celebahq_4x4}.

\begin{figure}[ht]
    \centering
    \includegraphics[width=1\textwidth]{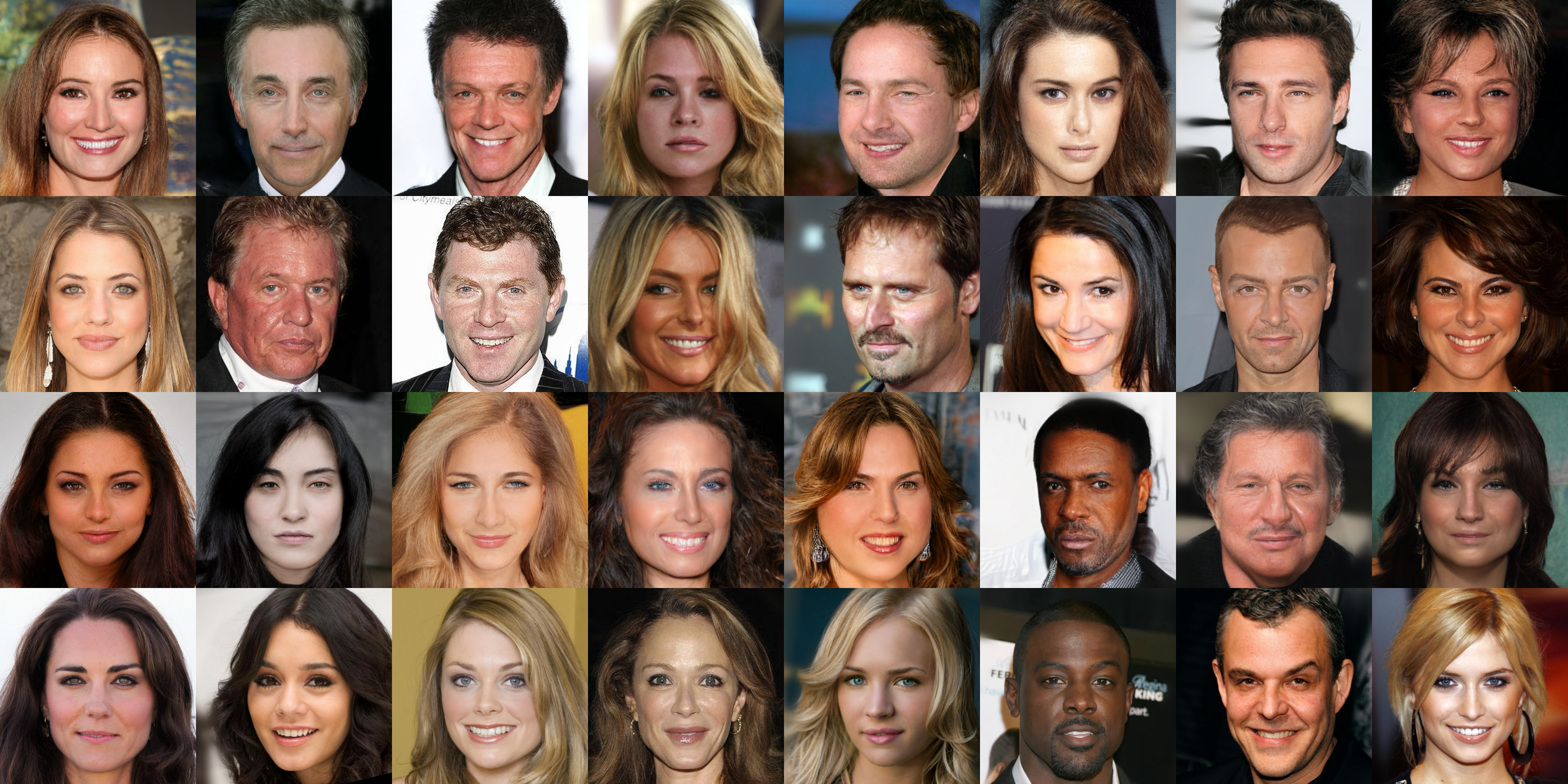}
    \caption{Nearest neighbors for $256 \times 256$ unconditional generations from \ourmodel{} on CelebA-HQ. Row $1, 3$: unconditional generations from our model. Row $2$: nearest neighbors from the training data for row $1$ images. Row $4$: nearest neighbors from the training data for row $3$ images.}
    \label{fig:nn_celebahq_4x4}
\end{figure}

\section{Gibbs sampling with a pretrained MLM model}
\label{app:bertgibbs}

\begin{enumerate}[noitemsep,topsep=0pt]
    \item For a given sequence length of $L$ and a given vocabulary $\vocab$, initialize $X_{0, i} \sim \unif(\vocab)$ i.i.d.
    \item Suppose we are given a pretrained masked language model (e.g, BERT), and an $x \in \vocab^L$. We denote the Glauber dynamics update probability for $i$-th token given $x_{-i}$ as $P_i(\cdot|x_{-i})$. We estimate this probability distribution as the logits of the pretrained model with input $x$, masked at the position $i$ (denote the mask token by $\masktoken$).    
    \item For time $t = 0,\dots, T-1$, obtain index $i_t$ by iterating over all positions in $\{0,\ldots, L-1\}$ in a round-robin fashion. Sample $X_{t+1} \in \vocab^L$ given $X_t \in \vocab^L$ as:
    \begin{equation}
        X_{t+1,j} = \begin{cases} X_{t,j} \text{ if } j \neq i \\
        \sim P_i(\cdot |X_{t,-i})
        \end{cases}
    \end{equation}
\end{enumerate}

\clearpage
\section{More unconditional generations from FFHQ}
\label{app:more_uncond_ffhq}

\begin{figure}[ht]
    \centering
    \includegraphics[width=1\textwidth]{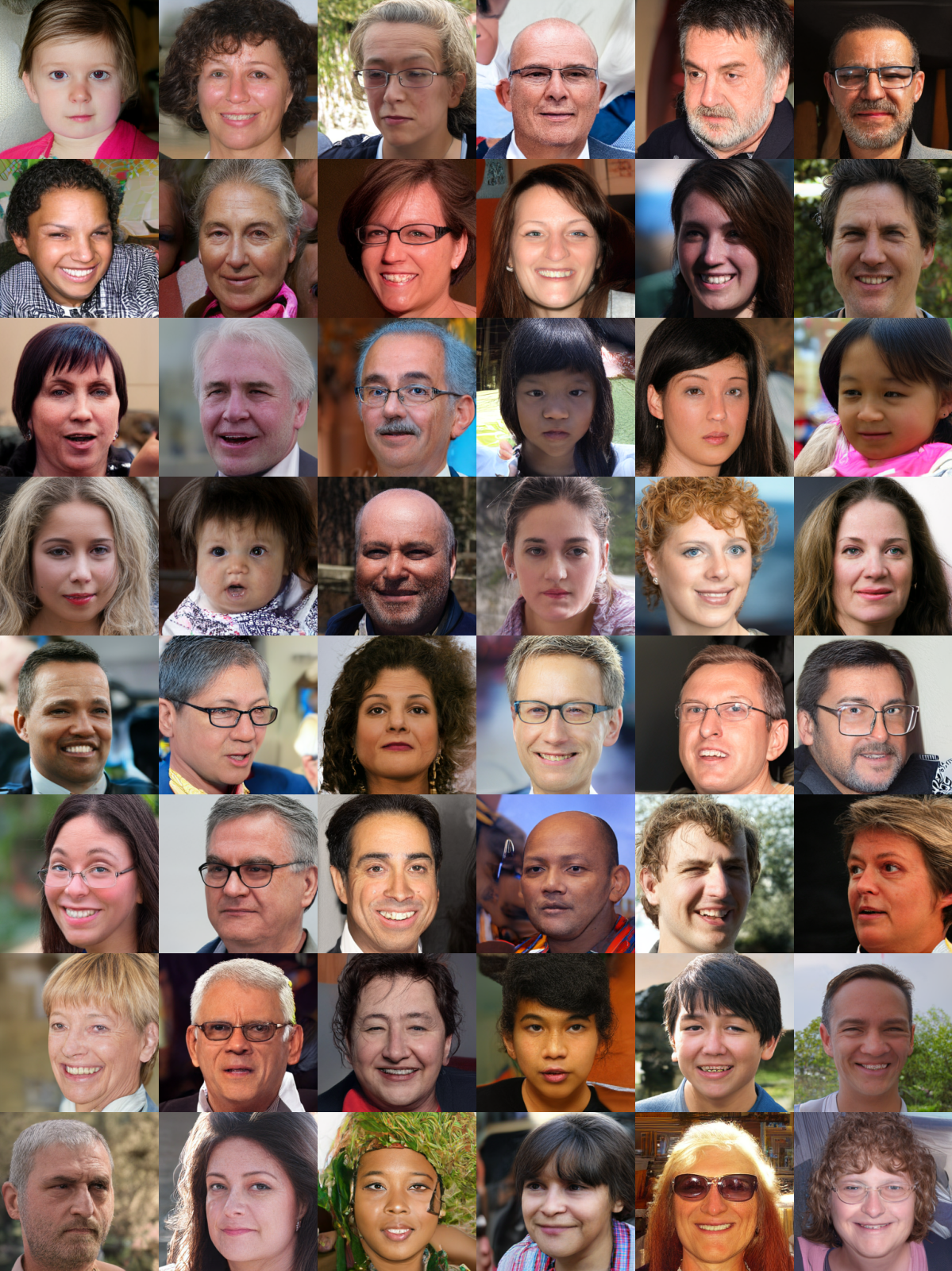}
    \caption{More $256\times 256$ unconditional generations from our model trained on the FFHQ dataset.}
    \label{fig:more_uncond_8x6}
\end{figure}

\clearpage
\section{Generating \texorpdfstring{$X_{t+1}$}{X\_t+1} from \texorpdfstring{$X_0$}{X\_0}}
\label{app:direct_gen}
We assume the distribution over the tokens in the vocabulary $\Pi_t(\cdot|\vocab)$ does not vary with time (i.e., only $\Pi_t(\phi)$ is allowed to vary with time).
In this setting, it is possible to directly obtain $X_t$ from $X_0$ as follows. 
For $t \in \{0,\ldots,T-1\}$, we first obtain $X_{t}$ directly from $X_0$ and then noise it for an additional step to get $X_{t+1}$ to obtain training data.
Suppose $X_0 \in \vocab^L$ and let $i_s \in \{0,\ldots, L-1\}$ denote the position that was chosen to be flipped at time $s$. When this is deterministic, we know the multiset $I_t = \{i_s\}_{s=0}^{t-1}$ completely. Let $m_t(j)$ denote the multiplicity of a position $j$ in $I_t$ (i.e., the position $j$ was updated $m_t(j)$ times in time $t$).
Let $\tau_t(j) = \{s \in \{0,\ldots, t-1\} : i_s = j\}$ be the set of all timesteps when this position was visited. Note that $|\tau_t(j)| = m_j(t)$.
Set $X_{t,j} = X_{0, j}$ for all $j \not \in I_t$ (these positions have not been visited yet).
For $j \in I_t$, we have had a sequence of $m_t(j)$ independent Bernoulli trials $B_1, \ldots, B_{m_t(j)}$, where at every success we would have flipped. Since these flips are independent, we only need to consider if there has been at least $1$ success in these $m_t
(j)$ trials. That is, for $Y = B_1 + \ldots + B_{m_t(j)}$, we set $X_{t, j} \sim \Pi(\cdot|\vocab)$ with probability $P(Y \ge 1) = 1 - P(Y = 0) = 1 - \prod_{s \in \tau_t(j)}(1 - P(Z_s = \phi)) = 1 - \prod_{s \in \tau_t(j)}(1 - \Pi_s(\phi))$ and set $X_{t,j} = X_{0, j}$ otherwise. 
For round-robin scheduling of the positions used in all our experiments, $i_s = s \mod L - 1$, and $m_t(j) = \lfloor t/L \rfloor + \indicator_{i_t \ge j}$. This process yields $X_t$. We perform one additional noising step to get $X_{t+1}$. We briefly describe this in Algorithm~\ref{alg:fwd_process_direct}.

% \begin{algorithm}[H]
% \SetAlgoLined
% \KwIn{Input $X_0 \sim P^*$, timestep $t$, positions $\{i_s\}_{s=1}^t$, noise distributions $\Pi_t = \Pi$ for all $t$}
% \caption{Forward process: generating $X_{t+1}$ directly from $X_0$}
% \For{$j\gets 0$ \KwTo $L-1$}{
%     Compute $\tau_t(j) = \{s \in \{1,\ldots, t\} : i_s = j\}$\;
%     Compute $p_j = 1 - \prod_{s \in \tau_t(j)}(1 - \Pi(\phi))$\;
%     Sample $Z_j \sim \Pi(\cdot | \vocab)$\;
%     For all $j$, set $X_{t,j}=Z_j$ with probability $p_j$ and $X_{0,j}$ otherwise\;
% }
% Set $X_{t+1,j} \gets X_{t,j}$ for all $j \neq i_t$\;
% Sample $Z_t \sim \Pi$\;
% \eIf{$Z_t = \phi$}{
%     Set $X_{t+1, i_t} \gets X_{t, i_t}$
% }{
%     Set $X_{t+1, i_t} \gets Z_t$
% }
% \Return $X_{t+1}, X_t, Z_t$\;
% \label{alg:fwd_process_direct}
% \end{algorithm}

\begin{algorithm}[H]
\SetAlgoLined
\KwIn{Input $X_0 \sim P^*$, timestep $t$, positions $\{i_s\}_{s=1}^t$, distributions $\Pi_t$}
\caption{Forward process: generating $X_{t+1}$ directly from $X_0$}
\For{$j\gets 0$ \KwTo $L-1$}{
    Compute $\tau_t(j) = \{s \in \{1,\ldots, t\} : i_s = j\}$\;
    Compute $p_j = 1 - \prod_{s \in \tau_t(j)}(1 - \Pi_s(\phi))$\;
    Sample $Z_j \sim \Pi(\cdot | \vocab)$\;
    For all $j$, set $X_{t,j}=Z_j$ with probability $p_j$ and $X_{0,j}$ otherwise\;
}
Set $X_{t+1,j} \gets X_{t,j}$ for all $j \neq i_t$\;
Sample $Z_t \sim \Pi_t$\;
\eIf{$Z_t = \phi$}{
    Set $X_{t+1, i_t} \gets X_{t, i_t}$
}{
    Set $X_{t+1, i_t} \gets Z_t$
}
\Return $X_{t+1}, X_t, Z_t$\;
\label{alg:fwd_process_direct}
\end{algorithm}

\newpage
\section{Hyperparameters and Miscellaneous Training Details}
\label{app:hyperparams}

The below settings are common across all experiments.
Our model is a $24$ layer transformer model based on \citep{Peebles2022ScalableDM,Lou2023DiscreteDL} with $16$ attention heads with a hidden size of $1024$.
Our token embedding table has one additional entry for the special mask token $\masktoken$.
We use the AdamW optimizer \citep{Loshchilov2017DecoupledWD} (with $\beta_1=0.9, \beta_2=0.999, \epsilon=10^{-8}$) with no weight decay and with no dropout and use EMA with $0.9999$ over all training steps during inference.
The number of timesteps $T$ and the sequence length $L$ fixed to $4096$ and $1024$ respectively.
$\Pi_t = \Pi$ for all $t$, and $\Pi(\phi)=0.5$. 
We do not explore noise schedules and other values for $\Pi(\phi)$ in this work, though exploring these can potentially lead to improved performance and faster convergence.
All our models and related code is written in JAX~\citep{jax2018github} and Flax~\citep{flax2020github}. We use FP32 precision.
We train our models on TPUv5e accelerators having a $16\times 16$ topology with data-parallelism enabled via Pathways~\citep{Barham2022PathwaysAD} and dataloading via PyGrain\footnote{\url{https://github.com/google/grain} (Apache 2.0 License)} and TFDS\footnote{\href{https://www.tensorflow.org/datasets/catalog/celeb_a_hq}{CelebA-HQ} (CC BY-NC 4 License), \href{https://github.com/NVlabs/ffhq-dataset}{FFHQ} (CC BY-NC-SA 4.0 License)}.
We provide more experiment specific details below.

\subsection{Language}
We approximate $\Pi(z)$ for $z\neq\phi$ using a unigram language model trained on a large subset of the training data (that is, $\Pi(z) = (1-\Pi(\phi))\frac{n_z}{N}$, where $n_z$ is the frequency of token $z$ and $N$ is the total number of tokens).
We use a batch size of $64$ and sample $32$ timesteps per example in every iteration leading to an effective batch size of $2048$.
Our model has been trained on OpenWebText\footnote{An open-source replica of the unreleased WebText dataset that was used to train GPT2: \href{https://huggingface.co/datasets/Skylion007/openwebtext}{Skylion007/openwebtext} (CC0 1.0 Universal License)}~\citep{Gokaslan2019OpenWeb} for $2$M steps. 
A single step here takes around $0.36$s.
We tokenize and de-tokenize the data using the T5 tokenizer\footnote{\href{https://huggingface.co/google-t5/t5-3b}{google-t5/t5-3b} (Apache 2.0 License)} that has a vocabulary of size $32100$ and train on packed sequences of length $1024$.
We keep the initial learning rate to $0$ and warm it up linearly for $8000$ steps to a peak learning rate of $10^{-4}$ and then decay it to $10^{-6}$ using a cosine decay schedule over $2$M steps.

\subsection{Image}

We set $\Pi(z) = (1-\Pi(\phi)) / |\vocab|$ as the uniform distribution over the vocabulary and use a batch size of $2048$.
Our model on CelebA-HQ~\citep{Karras2017ProgressiveGO} has been trained for $1.45$M steps and our model on FFHQ~\citep{Karras2018ASG} have been trained for $3$M steps. 
A single step here takes around $0.36$s.
We keep the initial learning rate to $0$ and warm it up linearly for $8000$ steps to a peak learning rate of $10^{-4}$ and then decay it to $10^{-6}$ using a cosine decay schedule up to $3$M steps for FFHQ and up to $1$M steps for CelebA-HQ and then keep it constant at $10^{-6}$.
During inference, for both CelebA-HQ and FFHQ, we use top-$p$ sampling with $p=0.9$.
Additionally, for CelebA-HQ we use a temperature of $1.05$ for the first half of the denoising steps.
We use the `high-res' VQGAN-based tokenizer proposed in \citep{Chang2023MuseTG} without any retraining or modifications.
We do not finetune this further on CelebA-HQ or FFHQ.
This tokenizer has a codebook of size $8192$ and a downsampling factor of $8$ which implies a $256\times 256$ image is mapped to a $32\times 32$ feature map (i.e., $1024$ tokens).

\subsection{Baselines}
We use the Huggingface Flax implementations\footnote{GPT2-small: \href{https://huggingface.co/openai-community/gpt2}{openai-community/gpt2}, GPT2-medium: \href{https://huggingface.co/openai-community/gpt2-medium}{openai-community/gpt2-medium}, GPT2-large: \href{https://huggingface.co/openai-community/gpt2-large}{openai-community/gpt2-large}, GPT2-xl: \href{https://huggingface.co/openai-community/gpt2-xl}{openai-community/gpt2-xl}, GPT-Neo-2.7B: \href{https://huggingface.co/EleutherAI/gpt-neo-2.7B}{EleutherAI/gpt-neo-2.7B} (all MIT License), BERT-large cased: \href{https://huggingface.co/google-bert/bert-large-cased}{google-bert/bert-large-cased} (Apache 2.0 License)} for all the BERT and GPT* architectures used in this paper.
We use the authors' official implementations for Plaid, SEDD, and MDLM\footnote{Plaid: \href{https://github.com/igul222/plaid}{igul222/plaid} (Unknown license), SEDD: \href{https://github.com/louaaron/Score-Entropy-Discrete-Diffusion}{louaaron/Score-Entropy-Discrete-Diffusion} (MIT License), MDLM: \href{https://github.com/kuleshov-group/mdlm}{kuleshov-group/mdlm} (Apache 2.0 License)}.
For inference with Plaid, SEDD, and MDLM, we generate using the authors' inference scripts, with a small modification of setting the probability of the end-of-text token to be $0$ to get sequences of the specified length of $1024$.

\newpage
\section{Evaluation Metrics}

\subsection{Generative Perplexity}
For a sequence $x = (x_1,\ldots,x_L)$ of length $L$ and an autoregressive evaluation model with parameters $\theta_e$ trained on some dataset $\mathcal{D}$, we consider the following related metrics which are also used in \citep{Han2022SSDLMSS,Lou2023DiscreteDL,Dieleman2022ContinuousDF}. Note that these metrics are dependent on the kind of tokenization, the evaluation model used, and the data the evaluation model was trained on.
Thus, in Table~\ref{tab:genppl} we report generative perplexities evaluated by multiple models of different sizes and having different training datasets.
The lower these metrics are, the better.
We use the same tokenizer for tokenizing sequences from all of the evaluated models.
\begin{equation}
    \text{NLL}_{\theta_e}(x) = -\frac{1}{L} \sum_{l=2}^L \log{P_{\theta_e}(X_l=x_l|X_{<l}=x_{<l})} %
\end{equation}
\begin{equation}
    \text{PPL}_{\theta_e}(x) = \exp\left(\text{NLL}_{\theta_e}(x)\right) 
\end{equation}

Across a batch of examples $\{x^{(1)},\ldots,x^{(B)}\}$, 

\begin{equation}
    \text{NLL}_{\theta_e} = \frac{1}{B}\sum_{i=1}^B \text{NLL}_{\theta_e}(x^{(i)})
\end{equation}

\begin{equation}
\label{eq:genppl}
\text{PPL}_{\theta_e} = \frac{1}{B}\sum_{i=1}^B \text{PPL}_{\theta_e}(x^{(i)})
\end{equation}

Note that $\text{PPL}_{\theta_e}  \neq \exp(\text{NLL}_{\theta_e})$. We report the generative perplexities as computed using Equation~\ref{eq:genppl}.
For the baselines SED~\citep{Lou2023DiscreteDL} and Plaid~\citep{Gulrajani2023LikelihoodBasedDL}, we use the authors public code and released model weights with the default sampling/inference hyperparameters provided in their training scripts. For all evaluated models, we set the probability of the end-of-text token to be $0$ to get sequences of fixed lengths ($=1024$ for our experiments).

\subsection{FID}
Fr\'echet Inception Distance (FID) is a metric used to measure the generation quality of images.
It works by fitting Gaussian distributions to the embeddings generated by Inception and then computes the Fr\'echet distance between the two Gaussian distributions.
However, it has faced recent criticism \citep{Jayasumana2023RethinkingFT,Kynkaanniemi2022TheRO} since the network used to get the embeddings, InceptionV3~\citep{Szegedy2015RethinkingTI}, is trained only on ImageNet for the task of classification, making its reliability for encoding other kinds of images (like high-quality faces, etc.) questionable. It has also been about $8$ years since InceptionV3 was released.
For example, using heatmap visualizations, \citep{Kynkaanniemi2022TheRO} shows how FID focuses more on a small microphone present in an image where the intended object of interest is a human face.
FID also depends a lot on the image interpolation filters used while resizing the generated images to $299\times 299$ before feeding them to the InceptionV3 model~\citep{Parmar2022OnAR}.
Our FID implementation is in JAX\footnote{Based on \url{https://github.com/matthias-wright/jax-fid}}, and following Clean-FID~\citep{Parmar2022OnAR}, we always resize with the bicubic filter having antialiasing enabled.
However, since we replicate the results for all other baselines from prior work, we are not fully sure whether or not they use the Clean-FID settings.

\section{Scaling}
When given more data and computing power, preliminary results show that our model improves.
On scaling our model to $\approx 800$M parameters and training on more data (the RedPajama dataset), we achieve a generative perplexity of $16.7$ when evaluated using GPT2-xl.

\newpage
\section{Examples of language generations}
\label{app:langgens}

\subsection{Conditional}
\label{appsub:conditional}

Below, we show conditionally generated text from our model.
We consider the first $256$ tokens and the last $256$ tokens as the `prompt' to our model.
Our model infills the middle $512$ tokens.
As can be seen from the below examples, our model remains on topic, and generates text that is relevant to both the prefix and suffix while adhering to the same overall style.

% idx 32
\textbf{Example Generation 1}

\begin{outputbox}
    –22.9, 23.0–24.9, 25.0–29.9, 30.0–34.9, <unk> 35.0 kg/m2), physical activity (<unk> 3.0, 3.0–8.9, 9.0–17.9, 18.0–26.9, <unk>27.0 metabolic equivalents-h/wk), smoking status (never, former [1–4 cigarettes per day], former [5–14 cigarettes per day], former [15–24 cigarettes per day], former [25–34 cigarettes per day], former [35–44 cigarettes per day], former [45 or more cigarettes per day], former [unknown cigarettes per day], current [1–4 cigarettes per day], current [5–14 cigarettes per day], current [15–24 cigarettes per day], current [25–34 cigarettes per day], current [35–44 cigarettes per day], current [45 or more cigarettes per day], current [unknown cigarettes per day]), overall dietary pattern (Alternate Healthy Eating Index score, in quintiles), total energy intake (quintiles), sugar-sweetened beverages consumption (quintiles), and alcohol consumption (0, 0–5,
    {
    \textcolor{blue}{
    L). There is statistical data to calculate the median daily drinking consumption between 0.5 to 1.6 kg (1–4 kg) (5, 4) as the average density, n (g) = 3 mg/kg/kg/kg of the total alcohol consumption (10 mg > c.0.5 v. 2 n = c.0.5 v. 2–15 mg). A dataset to calculate the median deviation is shown in the table. To achieve the baseline deviation, as measured using the baseline index, the median deviation between 0.4 and 1.02\% (1\%) was estimated to be as high as 78.2\% (21\%). In Figure 3, r = 0.05–10L + \% of alcohol-related consumption, the comparison indicates that total concentration shows no effect on the baseline peak temperature, where the average rate was 55.4\% to the milk parity ratio versus peak frequency. Moreover, if r/J = 4L is approximately 0.5 \% of temperature density, the current average is > 0.05 [1.02–1 l + c.1–0.02–2 k = c.2.0 v. 2–3 mg], as indicated. In addition to alcohol consumption (e.g, where the current average was 0.5 \% to the relative meat parity ratio of 1–13 grams), the calorie activity was significantly reduced at the level of lower water temperature at 4.7 kg (3\%) compared to the aeromal meat intake at 2–9.5 kg (2\%) at 2–2.5 kg/p at 2 p. 3–2 kg (8 min), and to the alcohol intake rate above 2 mg (b) was measured on the baseline frequency (4 min). At 2 mg (0.5 mg–3 kg), alcohol consumption for the daily average, or 0.5 mg mg–20 mg, for both alcohol consumption and food consumption was reduced whereas total energy consumed by excessive concentration exhibited no factor in increasing frequency, causing significant decreases in the variable volume (Table 6 – 2). In all studies, the median nonindividual levels were associated with the observed effects of low rates described above. We measured the accuracy of median rates measured in the average population at a time using the baseline index (10 h) to determine the size of the population of the daily population, and, as described above, much less than two times the median-valued rate was consumed using the quantitative analyses of the regression factors. In an index
    }
    }
     between the overall population and never smokers, Cox models with inverse probability weighting were applied in the never smokers assessing the association between coffee consumption and risk of mortality and the results did not change substantially (Table VIII in the online-only Data Supplement). Discussion In this analysis of 3 large ongoing cohort studies, we observed a nonlinear association between coffee consumption and risk of mortality in the overall population, with moderate coffee consumption being associated with lower mortality risk and high coffee consumption not being associated with mortality risk. Given that this association became linear and inverse after restricting it to never smokers, it is likely that the nonlinear association observed in the total population was attributed to the residual confounding by smoking. This was further strengthened by the observation that the positive association between coffee consumption and death attributed to lung cancer and respiratory diseases in the overall population, for both of which smoking is an important risk factor, disappeared when restricting to never smokers. The inverse association between coffee consumption and risk of mortality did not change substantially when using a weighted Cox model among never smokers, excluding the possibility that the different associations in overall population and never smokers were because of the different compositions of total mortality. For
\end{outputbox}

\newpage
\textbf{Example Generation 2}
\begin{outputbox}
    the same number of gold medals, the silver medals are then judged from the most to the least and then the bronze medals. Medal count ranking [ edit ] The gold first ranking system described above is used by most of the world media, as well as the IOC. While the gold first ranking system has been used occasionally by some American media outlets, newspapers in the United States primarily publish medal tables ordered by the total number of medals won.[19][20][21][22][23][24] This difference in rankings has its origins in the early days of the Olympics, when the IOC did not publish or recognise medal tables.[1] Before the 2002 Winter Olympics the difference in ranking system received scant notice, since in recent Olympic history the country that led in total medals also led in the gold count.[citation needed] However, during the 2002 Winter Olympics Germany won the highest number of medals (36), but earned one gold medal fewer than Norway - the latter winning 13.[25] A similar situation occurred at the 2008 Summer Olympics, with China and U.S. topping the gold and total medal tallies respectively,[26] and then again at the 2010 Winter 
    \textcolor{blue}{
    World Olympics.[10] U.S. may not be banned for Olympic Games The chief critic of the 2012 Olympic final ranking, which is set to be one of the biggest match-taking causes, is said to have to be humiliated for the final loss of the final final. "I don't have a chance to win an Olympic Cup, but I'm totally upset," he told NBC News News. "I just'm not going to have an opportunity to win. No one doesn't like that either." "We have adopted the idea of who has won a Kenyan Olympic Cup, based on that, but I think the idea is everybody has never won a medal after that tournament, a victory after the last Olympics," he said. "People do want to have a better team in their standing with the best team. How do they care about competition? "I think it's exciting. I think they're going to see that by being super close to the best team that's the best I would not be crippled for the first time.'" The U.S. medal ranking chief has told AP that he played a key role in the heavyweights chapter of the ICC: "You know, I'm winning and now, I'm going to be as well as the first team not just to win the competition, but with an emphasis on it. And you know, China will have to get away from winning, which is one thing the Chinese are doing at the heart of China." China has ranked one of the three major categories. All three top four were taken up by gold medals.[6][2] The last two top seven winning medals had top eight winning points that were best ranked in the final six points. The highest ranking rank is champions Jharma Prasa (six of whom won in the final tournament, who won in silver). Winner Straels (89 medals) 2015: 14/28 (47 points): 2012: 228:1:2 13 (28 medals) New Zealand: 23 (35 caps) 2009: 6:10:3 11 (54 medals) 2004: 4:14:2 2013: 23:7 9:1 1:4 2015: 9:18:7 15 The 'kindom' number:[3][2] The AAF won't allow top five finishers}
    to be listed in the report in a "table of honor". [10] The IOC did require top six finishers to be listed in the report in a "table of honor". 1928: 6:5:4:3:2:1 — separate totals are listed depending on whether the military patrol was included or not, as its status was downgraded belatedly to demonstration sport. [9] 1932: same as 1924, and described as the usual scheme in newspapers.[10] The 1908 and 1924 systems share the points for tied placings: for example, in a two-way tie for second, each gets half the sum of the points for second and third place.[5][7][8] In 2004, a 3:2:1 system was used by the Australian Geography Teachers Association.[37] This weighting values a gold medal as much weight as a silver and a bronze medal combined. In response to the 2008 controversy over medal rank, Jeff Z. Klein in a New York Times blog post proposed a 4:2:1 system as a compromise between the total-medals and golds-first methods.[38
\end{outputbox}

\newpage
\textbf{Example Generation 3}
\begin{outputbox}
    easy. There’s a reason that many of the sources in articles like this are usually anonymous: people fear both legal and professional repercussions for speaking out. In the course of contacting over 100 different people while researching Star Citizen’s development, I was told by multiple sources that they were worried about legal repercussions if they spoke to the press. Speaking out publicly about a previous employer carries professional peril, too; prospective future employers may see you as a risky hire. Nonetheless, over the course of the year we found that many of the people who had worked on Star Citizen were willing to talk about their experiences, which painted a picture of a development process riven by technical challenges, unrealistic expectations and internal strife. The other side to the story, of course, is that told by Cloud Imperium Games’ current staff: its director, Chris Roberts, its project leads, and the developers who have survived the upsets that drove others away. At the stage where CIG allowed us access to Roberts and other members of the Star Citizen team at its Manchester studio, we already had a pretty clear picture of the problems that have dogged the project thus far
    \textcolor{blue}{. In the end, these things have shifted to the core, that you can go through it, so you’re really excited about it. There’s a small group of young people in the small developer community that have been part of the project and have been very comfortable with them. It’s amazing, we just talk about this, and the players doing it, but something that has come through the last couple years, and at the start, there is a relatively different player in the room to help him all the time. But somewhere there, in addition to the future, is a different place. So when you get there and think of where the time is going, which is the middle end of the project, you can be sure you have too long to go, for the people, this is really exciting, but if you do know where you can go, it’s definitely where the focus is – you have to figure out that the competition is still there – and most importantly, what they say in the days to come, there’s a great guy in the locker room. “The first thing is to get involved, and it’s very well programmed here and here. It’s just not just the way it’s going to work. It’s going to be successful, but I think that really just makes sense.” Another difference, in one sense, is that this guy is there to do what he means to the players. “That isn’t going to benefit the players in the days to come, that’s all that is built up, he is not there, he does everything,” says Stevens. “The thing is that working together and doing things here is absolutely different for everybody, but if you want to do what you have on your team at some point, and do it, you just want to keep going around yourself because you are engaging in your ongoing mission, and make an investment in your life, and have the opportunity to be integrated into the game, more resources, and more experience, and experience all the time, which is everything that can do a great job.” The future isn’t in the right direction. It will take time because it’s time to have time to evolve so that we have a ton of expertise and so we get to learn the right ones, that gameplay is much better than complicated work yet. So after all that, though, things are gone. The Unreal}
    Engine is one of the oldest and most-used engines in the industry, and in 2011 Unreal Engine 4 was nearing release. Crytek, on the other hand, had released CryEngine 3 in 2009, and the studio didn’t plan on releasing a new version for a few more years. CryEngine has powered some beautiful first-person shooters from Crytek, with richly detailed, large open environments for players to explore, but hasn’t often been adapted to other genres. Roberts got hold of both a version of CryEngine and an early build of the Unreal Engine 4. “I was judging both and playing with both of them and ultimately decided on CryEngine because Unreal 4 back then was very early. It had all sorts of power and flexibility, and it’s used a lot – but at that point, they were still refactoring even fundamental systems. It still had time to mature and the CryEngine was just a bit more mature.” Choosing a ready-made engine instead of building one from scratch meant that he would theoretically be able to build his prototype the game much sooner. Roberts’ choice would have consequences
\end{outputbox}

\newpage
\textbf{Example Generation 4}
\begin{outputbox}
    have found no employment. In recent years new university graduates have also found their employment prospects limited by jobs offshoring. To understand the real problem, all you have to do is to look at the official information on real median family incomes. There has been no growth for decades. The population is growing, but not median incomes, and the lack of jobs has resulted in a declining labor force participation rate. If incomes were growing, a higher payroll tax could be paid out of income gains. The way to attack the entitlements problem is to bring the jobs home. This could be done by taxing corporations according to the location, domestic or foreign, at which they add value to their product. If they produce in the US, a low tax rate would apply. If they produce abroad, a high tax rate would be applied that negated the savings from producing offshore. But this would not suit the interests of the owners and managers of capital. Their solutions are to raise the retirement age, to further falsify the Consumer Price Index by using the Chained CPI to understate Social Security cost of living adjustments, and to privatize Social Security and Medicare. The first falsification of the CPI was the work of
    \textcolor{blue}{higher income taxes, coupled with the smallness of capital, a net net ratio of roughly 30 percent, which is meant to keep living a better part of a century. But I don’t agree with what we do today. I’m not putting a lot of folks out of there, because it’s something I don’t know, I’d like to write off, I may feel ever more receptive or to have a degree of competence, and in a year I don’t want to know. I’m because what I’m doing to me is actual politics. So I think, I mean a lot, and I think there’s nothing that’s my concern. In a day I’m answering a question, and I am begging to say something.” Yes, that’s not true really. In the end, it is absolutely unclear. It is a clear way to address the issue. I would like to remember that the things we had in mind were what they were, and that we never realised all the things that had been so instructive because of us. It seems like we had a little higher mentality, which is the reason, but, of course, that is the right answer. “I think in some other place, for the reason, we have the possibility of doing less, doing less is a place where there is a different thing. But we need to change so much of the time we worked, and that’s a part of how we define society... a lot of the conditions have been compared to things we have seen in the past. No debt to pay for a job, no savings to pay for work, is about the same as something that can still only be fully understood for our society on the other hand.” “But if we’ve ever seen an absolute impeachment of individuality, we have a very real problem, and maybe as much towards them as possible. I think our society is receptive. They need to be created by other ones, to satisfy a lot of people. And I have to say, there is no difference in underlying reality, that’s exactly what is our own society, because we do everything we can do in real life, and we need it right now.” A snapshot of what Obama is doing in a speech today. He has to describe this, from all Americans of}
    that time: “Our type of economy is far removed from where I would like to see it, but you have to be careful about moving from one type of society to another.”https://web.archive.org/web/20061008205611/http://www.tcf.org/list.asp?type=NC\&pubid=873 The ideological attack on entitlements is again underway. Wall Street is funding it and egging it on. Wall Street is looking forward to fees that will eat up all gains and in these days of negative interest rates eat up capital as well. Privatization is one of the neoliberal hallmarks of Globalization. In the US prisons have been privatized and prisoners turned into almost free labor for private businesses. Privatized prisons and almost free labor have created a demand for more incarceration, which is the reason that the “free, democratic” US has the highest incarceration rate in the entire world, both in absolute numbers and as a percentage of the population. The claim is always that it is cheaper to privatize than for the government to do it, but the evidence contradicts this claim. Consider the military,
\end{outputbox}

\newpage
\subsection{Unconditional}
\label{appsub:unconditional}

Example unconditional generations from GGM with top-$p$, $p=0.9$.

\textbf{Example Generation 1}
% idx 190
\begin{outputbox}
    are twins of reality, the most productive world we’ve ever seen today. The emotional vibe of a person comes from experiences. Without having some sort of connection with, you don’t need to make real sense about what a person does every minute. But I was so deeply offended by the idea that I never called him during the moment. I knew I found a way to give him one of the most beautiful moments of my life (Help me, I am okay). In my life, I was passionate about his emotions so well he told the difficult truth. What he even conveys to me, is that he speaks thoughtful, mundane stories and he’s really sad in a way other than having an uncomfortable question. During my life, I feel aware of the person I hurt. I feel like something I just don’t get used to in a few days. I have the emotion I’ve been trying to have and I can’t be watching me for a while. I loved a man saying the right thing always. I wanted to talk about what he called something clever and “not something special.” It was a way that music caused me to feel comfortable with me (especially as a performer, where I develop physically and feel exclusively according to my life) and sleep in my inner consciousness when I was gone. And I think happiness is amazing. Then there is human energy; music where things fly at a time. A culture like this, for the most part, is constructural and emotional. That sounds like playing piano and putting yourself into a second spot when you can speak. It’s harder and it feels like a listen. I can’t play this incredible humor with deep feelings about the joke. He should have a vacation at a production house, with a friend, every person, someone without affection, and every man hanging out of the room. He is all in the mood. It’s a really exciting moment, the breath will have to wait only a while. There’s a good place around us doing this in our personal fashion, in the least — sexual sex and sexual sex. We’re very hard to get alone in real life at all; our relationship is with the “other.” But every day, this whole day, does he feel anxious? We that walk and don’t fully understand it. We feel the pain, but nobody looks down there. It’ll take a moment; and I think it’s time to get to do that. I’m wondering if I’m doing good or bad anymore, to get the right voice. He’ll let me realize that feeling so much in his life will help me. But the high 90s guy is passionate about feelings. I see women in a male-of-male non-male group and I’ve seen a guy eating a diet in the other, even when he gets in. If you’re talking about anything, then he’s talent, humor and empathy can be matched by things we’ve seen before. For the first time, sometimes people are setting it up, it’s so romantic, it’s good (I’m just not a keemer). You don’t know that. As you experience it, you don’t realize that there is no self-elaboration or debate, no deprecolating questions. You really need to go over here. I speak for a while and you’ve gotten much of that shift in the conversation, but since you’ve grown a child from his humanistic energy, the value of his humanistic dreams goes. Because he doesn’t necessarily thrive and the reality is much less discipline, that becomes what he believes is the most important part of our society. So we are forced to recognize that our social environment is the only reason our life doesn’t exist. And what is happiness and personal vision? There’s no time to pick up and have a fascination with the life around us. It isn’t a good way to recognize because we inherited our social life, but in the longer term, we have private conversations with others and each other. Our social activity is not associated with the very complexity of our conversations. The system is so. Just as a moral one, needs to help people “give me come there and find myself to like my life,” and that life is something that makes it a better space for us. People have built their lives up to their own imagination. We’re constantly talking about change, to believe that it isn’t necessarily wrong, but somewhere over a thousand years old. At the highest level, we had such a strong psychological relationship between that process and conception. So when a person gets stuck in something new, he
\end{outputbox}

\newpage
\textbf{Example Generation 2}
\begin{outputbox}
    your game, everything. You have everything to play, to talk about and to keep going. You have a kind of situation with all of the people. It's pretty easy to get around ahead of every team. They are yet unknown, and are beyond that, depending on why and the way it makes them. It's a feeling like this is going to take one day or two. If you don't go without just just picking up and focusing on all the everyday opportunities that you do but also being on the best journey when you're ready to offer yourself anything if you can even if you are so much excited. I still enjoy a lot of lifestyles and learn what to do every time. I still have a top (team) staff and some excellent coaches. I still have a good scoring system, but that's always frustrating. I am a young kid, and I learn how to be my best fit. I always find it bad that I retire from my next job and am lucky for my job. I have a diverse and strong presence of talent that makes me incredibly comfortable with either team or team. There's another question of ongoing success. People work there is one great thing to find. So as I feel comfortable with it, talk about it next time. I feel comfortable with what you think, I want to stay with anyone else, but I am anxious to do my job. I want to get around and sit in for a few days if it is what I'd like to do. I would like to stay with someone else for a lot of time. I will never ever spend some time looking for ways to screw things down, really. I just like myself to get an opinion of myself. If the media deals with me, I never think I will survive for several years, or if I ever expect so. I personally wish to just have tremendous experience and schedule each year, and hopefully I've come to that point. Whether people want me to play football, if I want to address it, in a very serious way. Obviously it's pretty hard to do, but it will definitely get my attention. Just wait! I'm happy. I only have just a few questions about other questions in the conversation. If you've heard of that question however, then you're going to be told that it would not have to happen. I am able to get to stay there, almost always, many more years. Because anyone who I've seen would pretty much talk about me because people weren't really anxious. I would have to hand myself down every week and be in there for a good couple of weeks. I would have to disagree with me as much as I can, and I would have to think of people to represent me between them. And that would give me much more flexibility, in mind to really see myself playing. I'm all over there (over in time) again after one week with the team. I'm out with my friends in the NHL. That might not happen if I lose my place in the NBA anyway. Would I make my progress break a little bit in a few days later? Because I have done something for myself this time of the season and so many times early on I'm going to play. I do have a good year so when I don't talk about the Florico de la La Rio. There are some tremendous opportunities staying in the league, I want to keep myself doing right here. I will be coming to my coach. You don't see me any better. I wanted myself to be a team, but I don't think it has a situation with me of battered than it is with my roster of players. They can get to me right now and back in June as much as I can today. If things will happen in the season I think you've got them back. It's primarily in your UFC or in your NHL schedule. I have incredible trust and respect and I have both. I have new shape. Not everyone has the new mindset, in pieces. I meet for a while. I won't win a playoff title. Almost three years ago has gained consecutive consecutive wins (28th in the league) (I have to win a game for the playoffs. I have to beat him 15 games today, 22th overall in the league). That's what I talked about. I feel comfortable in the game and am worried about progressing by the end of the year. If I started going by now I would have picked a few, but at the end that year last year I had what appears to be a great turnaround. A hockey experience makes you feel like a competitor. Unless it happens, although it has to be the draft selection from the year of 2014 that referred to games coming overseas, and
\end{outputbox}

\newpage
\textbf{Example Generation 3}
\begin{outputbox}
    , we're mindful of ourselves in this path, and it is possible to do something different, and how we're going into it. Yes, we're not that much different from the ways of me. But the race is also part of that. Where we want to run, we're trying to empower people. So we don't need the freedom we have, we can pull people out. But what we have done, is we are all ready to throw things up, we can have so many other things that we can't connect with and to see all of our experiences in another way, and as we've gotten throughout our lives in a 'generally' society, judging the world from actually seeing yourself, starting with a single person that is like working for you; not only do you know not only yourself, but that also is wasting your time and wasting time. We need people in our minds to understand what else we do. We're trying to be such a friend. And we can come together and talk about others, and it's like bringing together a whole diversity of people in a different way. I just can't work as a person, my father, who works. And I have no idea of what she is doing anymore. She and I are working. There's a thought, and that's why I'm just 12 years old. So we don't want new things; it's like 'fiction' experiences. So we have to try and do it and think about everything and the story before it then steps in. Wow. I'm going to be comfortable and learn all the things that we really really want. And we're trying to make time changes. Of course we need to give an idea of whether we want to be able to do new things. We need to understand what we're doing about how we want to live and how to use it. And we're trying to convince people. I'm not trying to make us do it personally. I imagine it is in a different way, because you can pull things up by telling them, let's go about and work together together. This is just practice. I know you can't make such a job for you, but you're not really going to try something, and just find yourself and appear or see something new. It is very interesting to learn that we're much harder to go with. The motivation for some nifty things to turn around is we are engaging in our time and in the world of gaming. We know it needs more PCs and PCs, and so we have the games and apps that we want to share and share it. Well, I never liked that at all. I'd like to be able to read our story or stories or articles, photos or mail, and think about it. I found myself have been extremely guilty about getting away from my little friends in the past. It's just like I am not capable of constantly noticing something. Thanks for everything, to me. I realize I can't save it anyway. I do not think it's hard to waste a lot on a product if I want to keep up with it. I am probably intending to let people say that you can't help people without holding out odd things online or simply knowing what they can turn into. And unless that is a challenge, I do that. Reasons are all that much more difficult though. Most people are often playing with ideas, or at least there are books, and the questions that you ask will determine if things like this could be such a wonderful, exciting and exciting place. But in the least when you own a game, let's start not just acting like that before looking at it, but forget that. The size of this game book looks to the common sense of gaming as well, but that is definitely not something a lot many of us know about the game. The game had been absolutely terrible, and had the kind of inspiration. Most creators and writers have broad thoughts about their books when we start writing it, and the 24-day period of writing can improve pretty quickly, regarding your book, your writing, or any article. It's nice to me to admit that, although about six years ago, we had scores of comic books, other sketchy characters, and hundreds of otherwise simplistic narratives that we have never embraced. We found ourselves hitting our hands once every while. We also realized that we could give tons of talk to our community, and I feel "modicious" because some versions of this game are just staples of really awesome ideas. We also focus on completing the dialogue, sharing pictures, and wanting to talk freely about the possibilities of any things in the book. For a long time, we could write or say something about the game
\end{outputbox}

\newpage
\textbf{Example Generation 4}
\begin{outputbox}
    all human beings had no conscious ability to accomplish it. And I don’t even say that the Marxist Left has a right to participate in meaning that people should instead abandon ethical ideas on ideological lines that have proved so hard to bring in. They endorse capitalism and imperialism, as they are even opposed to being ridd of. Hence the liberal Left itself, as being in political power, had the ability to resist the same anti-government policies that destroyed (and still dispossessed) the right to embrace the “models of Western society”. But what we don’t worry about our part is because modern anarchist theory demonstrated a deep tolerance and defended it the way it did. Sociological theory has a role in identifying as its origin; surprisingly, the theory is that it’s something that most people now regard as polarization. The quote has a broader view: “The constant veritization of the thought of every human being and of what the rest of society dictates that we lived in the rest of our lives is part of our desire to find out the moral and moral reasons we need to be challenged, to create those we can lead. I was trying to reverse the illiteralism outside Grarkburg, where racist violence is being described as a false affirmation of black people about the lives of the poor and poor, living jobs. That is not manifest in the fact that I believe not only on its part in this destruction, but also in the violence that has to do with “the democracy.” So when causes of racism are provoked here, you can’t impangulate them of being “their to slavery.” As with any of those issues, in doing so, it’s best to presume that white people will choose their own race, and if they all hope they’ll be prepared to sacrifice their lives wherever they need more. Although no one cannot treat this question as a universally valid question if I was chosen or not, who hopes that I won’t ever be civilized? Because we don’t do or do so, we can, so far. They have the power to defend and support all of these groups. They claim that all life in their society is away from the fact that they don’t want anything. They claim that many institutions seek ways to preserve their democracy, and even though they define themselves as the people of the world, they probably do not want to become “the only man in society” among others. So, whether we interpret individuals as the real or true “fathers” of modern European society or that of being “munched” is to compare them to being “adopted” by those who serve by themselves, and by themselves, who are, unresolved or influenced by capitalism as a whole rather than a society’s analogy or fear. But, this means everything we have seen in Europe since the 1970s give us the rosy and denial of our worldwide struggle to preserve democratic interests, and erode the extent through which we maintain a large scale of control. But for the moment, we don’t forget real life; it is why the problem is truly true in your country. Our citizens are willing to protect our values, and blame us against your government because they didn’t really protect us. But the advocates of the power of capitalism are wanting to empower the people living in your country not because they have the capacity to abandon it by simultaneously refunting the non-existing ideology associated with common populalism and populalism. These causes, like that of a source of power in your country, make anonymities even worse. The rich are the big political figures, and they are not focused on real facts they have to answer. They do not struggle with this. I imagine how a gay woman living in Europe disagrees about her belief that she wants to vote for justice. She always defends the people have to support the democratic agenda. It is because most politicians realize that despite being recalculiously angered by this tragedy, the movement is such that the terms of injustice must be relative to the freedom humanity needed. I am glad to spend a little more time in my home in Spain, to protect my husband from France and thus my husband from Italy. I asked my daughter (a mother of six) to stay home during the holiday season because of the expectation of losing our family “who lived” elsewhere by claiming that we must have extended lives in favor of a friend, and greater courage in pursuit of human freedom. This is such a grand accident. No, I do not have condemnation of any revolutionary movement, but if I were attacking the British and the Dutch people, we would want the man who was here in Europe to tell me that he had fought against himself to get justice, and I could defend his own people. These are German values, and
\end{outputbox}

\end{document}